\renewcommand{\fnum@figure}{Fig. \thefigure}
\newtheoremstyle{itshape}
  {.0\baselineskip\@plus.0\baselineskip\@minus.0\baselineskip}
  {.0\baselineskip\@plus.0\baselineskip\@minus.0\baselineskip}
  {\itshape}
  {}
  {\bfseries}
  {.}
  { }
  {}
\theoremstyle{itshape}
\newtheorem{theorem}{Theorem}
\newtheorem{lemma}{Lemma}
\newtheorem{definition}{Definition}
\renewcommand{\algorithmicrequire}{ \textbf{Input:}}
\renewcommand{\algorithmicensure}{ \textbf{Output:}}
\begin{document}
\include{header}
\title{Client Selection for Federated Bayesian Learning}

\author{Jiarong Yang, Yuan Liu, and Rahif Kassab 

\thanks{This work was supported in part by the Natural Science Foundation of China under Grant 61971196, Grant U1701265, and Grant U2001210. \emph{(Corresponding author: Yuan Liu.)}
\par
J. Yang and Y. Liu are with school of Electronic and Information Engineering, South China University of Technology, Guangzhou 510641, China (e-mails: eejiarong@mail.scut.edu.cn, eeyliu@scut.edu.cn). 
\par
Rahif Kassab is with the King’s Communications, Learning and Information Processing (KCLIP) Lab King’s College London, London WC2B4BG, U.K. (email: rahif.kassab@kcl.ac.uk). 
}
}

\maketitle

\vspace{-1.5cm}
\begin{abstract}
Distributed Stein Variational Gradient Descent (DSVGD) is a non-parametric distributed learning framework for federated Bayesian learning, where multiple clients jointly train a machine learning model by communicating a number of non-random and interacting particles with the server. Since communication resources are limited, selecting the clients with most informative local learning updates can improve the model convergence and communication efficiency. In this paper, we propose two selection schemes for DSVGD based on Kernelized Stein Discrepancy (KSD) and Hilbert Inner Product (HIP). We derive the upper bound on the decrease of the global free energy per iteration for both schemes, which is then minimized to speed up the model convergence. We evaluate and compare our schemes with conventional schemes in terms of model accuracy, convergence speed, and stability using various learning tasks and datasets.
\end{abstract}

\begin{IEEEkeywords}
 Federated Bayesian learning, scheduling,  client selection, variational inference.
\end{IEEEkeywords}

\section{Introduction}
The rapid development of machine learning has impacted various aspects of today's society, from autonomous vehicles to computer vision, bioscience and natural language processing \cite{Jordan255}. Conventionally, training machine learning models relies on the availability of big datasets at a centralized remote location, e.g., a central server, where training is carried out. However, most data is generated in a distributed fashion, e.g., by edge devices/clients, and thus needs to be collected at the central server to enable training. With the increasing awareness for data security and privacy, uploading local data to a remote location is no longer a viable solution. To cope with this challenge, federated learning \cite{10.1145/3298981,8940936,9084352} was proposed as a distributed learning framework that performs model training at the client side using the client's local dataset and then shares the client's model with the server, where various models are aggregated. Thus, under this framework, the client's data is preserved on their device. In some practical scenarios, such as medical diagnosis and autonomous driving, models need to provide not only predictions but also a calibrated confidence measure that reflects the amount by which the client can trust a given prediction. However, most federated learning algorithms are based on frequentist learning principle which is known to be unable to quantify epistemic uncertainty, yielding overconfident decisions \cite{pmlr-v70-guo17a,lakshminarayanan2017simple}. In contrast,  Bayesian learning is more suitable in such scenarios as it provides a more accurate estimate of uncertainty by optimizing over the space of distributions of model parameters instead of a single vector of model parameters \cite{Barber2012BayesianRA,8453245}. 

The main goal of \emph{federated Bayesian learning} is to train a global posterior by clients which exchange model parameters with a central server over a wireless channel \cite{vehtari2020expectation,bui2018partitioned,corinzia2019variational}. Therefore, the convergence of the model is often affected by the number of communication rounds  \cite{ren2020accelerating,zhao2020federated} and data-heterogeneity \cite{li2019convergence,zhao2018federated}. Specifically, on one hand, model training requires frequent exchanges of model parameters between the server and clients over a channel, so a large number of communication resources (i.e., time and frequency resources) will cause large communication delays, which in turn affects the convergence speed of the model. On the other hand, clients are isolated from each other, consequently, there is no guarantee that the data between clients is independent and identically distributed (IID). Such heterogeneity of client's data will reduce the model accuracy and slow down the convergence speed. Therefore, it is concluded that local learning updates  of distributed clients do not contribute equally to global model convergence \cite{zhao2015stochastic,liu2020data,zeng2021noise}. A promising technique to tackle this issue is scheduling or client selection, i.e., select the clients with most informative data for participation so as to not only accelerate the learning process but also reduce communication overhead.

\subsection{Prior Work}
Existing federated Bayesian learning techniques can be classified to two families: \ac{MC} sampling or \ac{VI}. \ac{MC}-based federated Bayesian learning\cite{ahn2014distributed, pmlr-v115-mesquita20a} such as \ac{DSGLD}, maintains a number of Markov chains that are updated via local Stochastic Gradient Descent (SGD) and injects noise into the parameter updates \cite{welling2011bayesian, ahn2014distributed}. Unfortunately, the model convergence for such techniques is difficult to assess and the convergence speed is low. \ac{VI}-based federated Bayesian learning \cite{angelino2016patterns, corinzia2019variational, bui2018partitioned,kassab2022federated} such as \ac{PVI} optimizes over parametric posteriors from tractable families (e.g., the exponential family) using natural gradient descent \cite{bui2018partitioned}. However, limiting the optimization to tractable families incurs bias. Accordingly, \ac{DSVGD} has been proposed as a method to eliminate such bias by encoding the variational posterior with deterministic particles that are updated by each scheduled client using \ac{SVGD}  \cite{10.5555/3157096.3157362}. When the number of particles is large enough, any posterior could be approximated. By tuning the number of particles, \ac{DSVGD} can trade off bias, convergence speed and per-iteration complexity \cite{kassab2022federated}. More details on \ac{SVGD} and \ac{DSVGD} are given in Section \ref{sec:system_model_and_learning_mechanism}.
\par
One of the biggest challenges in federated learning is decreasing the communication load. This challenge, known as \emph{communication-efficient} federated learning, recently received lots of interest in the wireless research community. More specifically, multiple techniques have been studied to decrease the communication  load, for e.g., through the joint design of learning and communication such as learning update compression \cite{sattler2019sparse, lin2018deep}, radio resource management \cite{wang2019adaptive, chen2019performance, tran2019federated} and over-the-air computation \cite{yang2020federated, zhu2019broadband, zhu2020one}. Another technique, often referred to as \emph{client selection} or scheduling, exploits the difference in the clients' model updates and selects the clients with the most informative updates in order to increase the convergence speed while decreasing the total communication load \cite{zhao2015stochastic, 9107235,9170917,9252927}. For example, authors of \cite{9170917} use gradient divergence to quantify the importance of each local update, and propose a novel importance and channel-aware scheduling policy. In \cite{9252927}, the inner product between each local gradient and the ground-truth global gradient is used to quantify the importance of data, and a fast convergent algorithm is proposed where the local update is weighted by the the inner product. 
\par
Most of the existing  schemes considered in the literature are applicable to frequentist federated learning. In this paper, we explore the benefits of client selection for  federated Bayesian learning. More specifically, we consider \ac{DSVGD} \cite{kassab2022federated}, which is a non-parametric distributed variational inference framework for federated Bayesian learning.  \ac{DSVGD} uses a number of non-random and interacting particles to represent the variational posterior distribution of the model's parameters. The particles are updated iteratively by each scheduled client using \ac{SVGD} \cite{10.5555/3157096.3157362}. We propose and analyze two client selection approaches for DSVGD that lead to faster convergence and lower communication cost.

\subsection{Contributions}
In this paper, we propose two selection schemes for \ac{DSVGD} based on the \ac{KSD} \cite{pmlr-v48-liub16} and the \ac{HIP} \cite{berlinet2011reproducing} for federated Bayesian learning. The main contributions of this paper are summarized as follows:

\begin{itemize}
\item We first propose a client selection scheme based on the \ac{KSD} between the global posterior and the local tilted distribution.  The client with a larger \ac{KSD}  will have a higher probability of being selected since it provides updates that maximize the decrease of the local free energy per iteration.
\item We also propose a scheme based on the \ac{HIP} between the SVGD update function for the global and local likelihood. We derive the closed form of \ac{HIP} using the properties of the \ac{RKHS}. The larger the \ac{HIP} the higher the probability of a client being selected per iteration. 
\item For both schemes, we derive the upper bound of the decrease of the global free energy between any two consecutive iterations. It is revealed that both schemes improve the model convergence, where the KSD-based scheme has smaller communication overhead, but the HIP-based scheme has faster convergence.
\end{itemize}

The rest of the paper is organized as follows. Section \ref{sec:system_model_and_learning_mechanism} introduces the federated Bayesian learning framework and DSVGD. Section \ref{sec:Scheduling_Schemes_for_Federated_Bayesian_Learning} proposes two client selection  schemes and the corresponding convergence analysis. Section \ref{sec:Experimental_Results} provides experimental results, followed by conclusions in Section \ref{sec:Conclusion}. 

\section {System Model and Preliminaries}
\label{sec:system_model_and_learning_mechanism}
In this section, we first describe the considered system model and then introduce the technical preliminaries used in our paper. The key notations that are important for the system model and the technical preliminaries are summarized in Table \ref{table1}.


\begin{table*}[t]
 \centering
 \caption{Key notations.}
\label{table1}
\begin{tabular}{|c|c|c|c|l}
\cline{1-4}
\textbf{Symbol}          & \textbf{Definitions}                                  & \textbf{Symbol} & \textbf{Definitions}                             &  \\ \cline{1-4}
$\theta$                    & model parameter vector                                & $F(q(\theta))$         & global free energy                               &  \\ \cline{1-4}
$K$ & number of participated clients & $\mathcal{A}_p \boldsymbol{f}(\theta)$          & Stein’s operator                                 &  \\ \cline{1-4}
$D_k$                      & dataset of client $k$                                   & $\mathbb{D}(p,q)$         & Stein discrepancy                                &  \\ \cline{1-4}
$L_k$                       & loss function of client $k$                             & $\mathbb{S}(p,q)$        &  Kernelized Stein Discrepancy                     &  \\ \cline{1-4}
$\alpha$ &
  temperature level &
  $\{\theta_n\}_{n=1}^N$ &
  \begin{tabular}[c]{@{}c@{}}global particles representing $q(\theta)$\end{tabular} &
   \\ \cline{1-4}
$p_0(\theta)$ &
  prior distribution &
  $\{\theta_{n,k}\}_{n=1}^N$ &
  \begin{tabular}[c]{@{}c@{}}local particles representing $t_k(\theta)$\end{tabular} &
   \\ \cline{1-4}
$p_k(\theta)$                   & local likelihood at client $k$                          & $\mathrm{k}(\theta, \theta')$        & kernel function used to define a RKHS $\mathcal{H}^d$           &  \\ \cline{1-4}
$q(\theta)$                     & variational global posterior distribution             & $\mathcal{H}^d$              & RKHS defined by a kernel $\mathrm{k}(\theta, \theta')$                                          &  \\ \cline{1-4}
$q_{opt}(\theta)$ &
  optimal global posterior &
  $\phi(\cdot)$ &
  \begin{tabular}[c]{@{}c@{}}SVGD update function restricted to a zero-centered ball of $\mathcal{H}^d$ \end{tabular} &
   \\ \cline{1-4}
$\tilde{q}_{opt}(\theta)$ &
  unnormalized optimal global posterior &
  $\phi(\cdot)^{*}$ &
  \begin{tabular}[c]{@{}c@{}}the target SVGD update function\end{tabular} &
   \\ \cline{1-4}
$\tilde{p}_k(\theta)$                   & local tilted distribution at client k                 & $\mathrm{K}(\theta, \theta_n)$      & kernel function used for kernel density estimate &  \\ \cline{1-4}
$t_k(\theta)$ &
  \begin{tabular}[c]{@{}c@{}}approximate scaled local likelihood at client $k$\end{tabular} &
  $I$ &
  global iteration numbers &
   \\ \cline{1-4}
$P_k(\theta)$ &
  \begin{tabular}[c]{@{}c@{}}selection probability of client $k$\end{tabular} &
  $L$ &
  local SVGD iterations numbers &
   \\ \cline{1-4}
$\epsilon$ &
  \begin{tabular}[c]{@{}c@{}}learning rate of SVGD\end{tabular} &
  $i$ &
  index of global iteration &
   \\ \cline{1-4}
$F_k(q(\theta))$                 & local free energy at client $k$                         & $l$               & index of local training iteration               &  \\ \cline{1-4}
\end{tabular}
\end{table*}

\subsection{Federated Bayesian Learning}
\begin{figure}[t]
\begin{centering}
\includegraphics[width=1.025\linewidth]{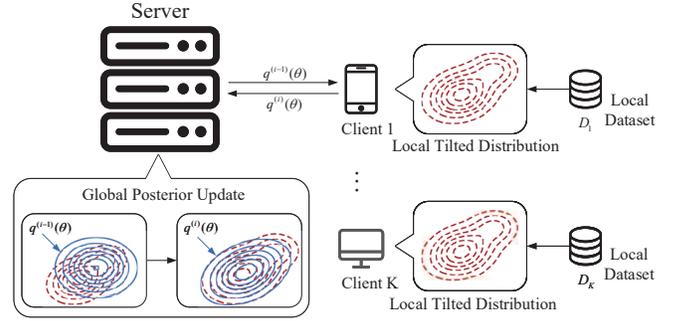}
\vspace{-0.2cm}
\caption{Federated Bayesian learning system.}\label{fig:System}
\end{centering}
\end{figure}
\label{sec:system_model_federated_bayesian_learning}
As shown in Fig. \ref{fig:System}, we consider a system setup for federated Bayesian learning\cite{vehtari2020expectation}, where a central server maintains the variational global posterior distribution $q(\theta)$ where $\theta$ is the model parameter vector, for e.g., weights of a neural network. Furthermore, $K$ clients, based on their local dataset $D_k$, exchange their own view of the variational posterior $q(\theta)$ with the server. Following the Bayesian learning framework \cite{knoblauch2019generalized}, the goal is to minimize the global free energy over the space of distributions as:
\begin{align}\label{eqn:glo_energy}
\min_{q(\theta)} \left\{F(q(\theta))=\sum_{k=1}^{K} \mathbb{E}_{\theta \sim q(\theta)}[L_k(\theta)] + \alpha \mathrm{KL}(q(\theta) \rVert p_0(\theta))\right\},
\end{align}
where $L_k(\theta)$ is the local training loss for model parameter $\theta$. The global free energy reflects a trade-off between the sum loss function over all clients data and the complexity defined by the Kullback-Leibler (KL) divergence between $q(\theta)$ and known prior distribution $p_0(\theta)$. The trade-off the two terms is dictated by the temperature level $\alpha$. Using simple algebraic manipulations, the global free energy in \eqref{eqn:glo_energy} can re-written as
\begin{align}\label{eqn:glo_kl}
\min_{q(\theta)} \left\{F(q(\theta))=\alpha\mathrm{KL}(q(\theta) \rVert \tilde{q}_{opt}(\theta))\right\},
\end{align}
where $\tilde{q}_{opt}(\theta)$ is the unnormalized optimal global posterior define as 
\begin{align}
    \tilde{q}_{opt} = p_0(\theta)\prod_{k=1}^K p_k(\theta) \label{eqn:q_tilde_opt},
\end{align}
where $p_0(\theta)$ is the prior distribution and $p_k(\theta)$ is the local likelihood at client $k$ defined as $p_k(\theta) =\exp(-\alpha^{-1}L_k(\theta))$. \eqref{eqn:glo_kl} shows visually the optimization target of the variational inference. Specifically,  Bayesian inference aims to obtain the posterior distribution, which is generally referred to as the target distribution, over the model parameters. Variational inference uses KL divergence to measure the similarity between two distributions and frames the Bayesian inference problem into a deterministic optimization that approximates the target distribution with a simpler distribution by minimizing their KL divergence. Therefore, global free energy refers to the KL divergence to be optimized in the variational inference, which is the KL divergence between the variational global posterior $q(\theta)$ and the optimal global posterior $\tilde{q}_{opt}(\theta)$, as shown in \eqref{eqn:glo_kl}.
By normalizing $\tilde{q}_{opt}(\theta)$ with the normalization constant $Z$ as $q_{opt}(\theta) = \frac{\tilde{q}_{opt}(\theta)}{Z}$ and setting $q(\theta) = q_{opt}(\theta) \propto \tilde{q}_{opt}(\theta)$, the global free energy can be minimized. However, this is not possible for two main reasons: first, computing $q_{opt}(\theta)$ requires the normalization constant $Z$ which involves computing a multidimensional integral and thus is not feasible in practice. Second, from \eqref{eqn:q_tilde_opt} and the definition of $p_k(\theta)$, computing $\tilde{q}_{opt}$ requires access to all datasets of the $K$ clients in order to compute $p_k (\theta)$.\par
Following the federated Bayesian learning framework in \cite{kassab2022federated}, we restrict $q(\theta)$ to the family of distributions having the following form:
\begin{align}\label{eqn:global_likelihood}
q(\theta)=p_0(\theta)\prod_{k=1}^K t_k(\theta).
\end{align}
The global free energy minimization is then conducted in a distributed manner minimizing the local free energy at each selected client as 
\begin{align}\label{eqn:local_kl}
\min_{q(\theta)} \left\{F_k(q(\theta))=\alpha\mathrm{KL}(q(\theta) \rVert \tilde{p}_k(\theta))\right\},
\end{align}
where 
\begin{align}\label{eqn:pk_update}
\tilde{p}_k(\theta)=\frac{q(\theta)}{t_k(\theta)}\cdot p_k(\theta) 
\end{align}
is the local tilted distribution and $t_k(\theta)$ approximates the scaled local likelihood $\frac{p_k}{Z_k}$ where $Z =\prod_{k=1}^KZ_k$. It is shown in \cite{bui2018partitioned} that minimizing the local free energies \eqref{eqn:local_kl} across all clients leads to the minimum of the global free energy in \eqref{eqn:glo_kl} by suitably combining the local energies minimums. Note that the clients contribute differently to the decrease of the global free energy, which can be accelerated by preferentially selecting clients with higher contribution. By definition of $q(\theta)$ in \eqref{eqn:global_likelihood}, the tilted distribution $\tilde{p}_k$ in \eqref{eqn:pk_update}, the tilted distribution $\tilde{p}_k$ contains the true likelihood $p_k(\theta)$ of client $k$ and the approximate scaled likelihoods $t_k'$ for $k' \neq k$, of the remaining clients. Once the local free energy minimization problem in \eqref{eqn:local_kl} is solved by the selected client $k$ at iteration $i$, a new global variational posterior $q^{(i)}(\theta)$ is obtained and the approximate scaled local likelihood is updated as 

\begin{align}\label{eqn:tk}
t_k^{(i)}=\frac{q^{(i)}(\theta)}{q^{(i-1)}(\theta)}t_k^{(i-1)}(\theta),
\end{align}
for each selected client $k$, while $t_{k^\prime}^{(i)}(\theta) = t_{k^\prime}^{(i-1)}(\theta)$ for clients $k^\prime$ that are not selected.

To summarize, the  general process of federated Bayesian learning is as follows: 
\begin{itemize}
\item{\bf Step 1:} The server serially selects a client and sends the current variational global posterior $q^{(i-1)}$ to the selected client.
\item{\bf Step 2:} The selected client minimizes the local free energy in \eqref{eqn:local_kl} to obtain the current variational global posterior $q^{(i)}(\theta)$.
\item{\bf Step 3:} The updated variational global posterior $q^{(i)} (\theta)$ is uploaded to the server and the approximate scaled local likelihood is updated using \eqref{eqn:tk} at the client side.
\item{\bf Step 4:} The server updates the current variational global posterior by setting $q^{(i-1)}(\theta) \longleftarrow q^{(i)}(\theta)$ and continues to serially select the next client.
\end{itemize}

\subsection{Preliminaries }
In this paper, we  investigate the client selection problem in DSVGD, which is a novel federated Bayesian learning framework and applies SVGD to the federated Bayesian learning framework presented in Section II-A to solve the local free energy in \eqref{eqn:local_kl}. Next, we illustrate the preliminaries of KSD, SVGD, and DSVGD  that are used in our algorithms.

\subsubsection{Kernel Stein Discrepancy} 
We start by introducing two important definitions in the Stein's method, namely the Stein's operator and the Stein's identity. Stein's operator is defined as $\mathcal{A}_p \boldsymbol{f}(\theta)=[\nabla\log p(\theta) \boldsymbol{f}(\theta)^\top + \nabla \boldsymbol{f}(\theta)]$, where $p(\theta)$ is a  smooth density and $\boldsymbol{f}=[f_1(\theta),...,f_d(\theta)]^\top$is a smooth vector function. While Stein's identity can be written as $\mathbb{E}_{\theta \sim p}[\mathcal{A}_p \boldsymbol{f}(\theta)]=0$, it is generally used to measure the divergence between two distributions. Given two distributions $p(\theta)$ and $q(\theta)$ where $q(\theta)$ is possibly unnormalized, the divergence is defined as the maximum of the expectation for trace of the Stein's operator $\mathcal{A}_q \boldsymbol{f}(\theta)$ under $\theta\sim p(\theta)$ as 
\begin{align}\label{eqn:Stein_d}
\mathbb{D}(p,q)=\max_{\boldsymbol{f}\in \mathcal{H}^d} \{\mathbb{E}_{\theta \sim p}[\mathrm{trace}(\mathcal{A}_q \boldsymbol{f}(\theta))]\},
\end{align}
where $\boldsymbol{f}$  is restricted to lie within the unit ball of a RKHS $\mathcal{H}^d$ defined by a positive definite kernel $\mathrm{k}(\theta,\theta')$. According to \cite{10.5555/3157096.3157362}, we can rewrite \eqref{eqn:Stein_d} using the properties of RKHS as
\begin{align}\label{eqn:ksd_rkhs}
\mathbb{D}(p,q)=\max_{\boldsymbol{f}\in \mathcal{H}^d} \{ \langle \boldsymbol{f},\phi^{*} \rangle_{\mathcal{H}^d},~~s.t.~~\|\phi^{*}\|_{\mathcal{H}^d}\leq 1   \},
\end{align}
where $\phi^{*}(\cdot)=\mathbb{E}_{\theta \sim p}[\nabla\log q(\theta) \mathrm{k}(\theta,\cdot)+\nabla \mathrm{k}(\theta,\cdot)]$. According to the properties of the inner product, the maximum value of $\langle \boldsymbol{f},\phi^{*}\rangle_{\mathcal{H}^d}$ is obtained as $\|\phi^{*}\|_{\mathcal{H}^d}$ when $\boldsymbol{f}$ is in the same direction as $\phi^{*}$. In \cite{pmlr-v48-liub16}, the KSD is defined as the square of this maximum value:
\begin{align}\label{eqn:KSD}
\mathbb{S}(p,q)=\|\phi^{*}\|_{\mathcal{H}^d}^2.
\end{align}

\subsubsection{Stein Variational Gradient Descent} \ac{SVGD} \cite{10.5555/3157096.3157362} is a non-parametric variational inference algorithm that aims to solve optimization problems of a similar form to \eqref{eqn:local_kl}. \ac{SVGD} encodes  the variational distribution $q(\theta)$ using a set of particles $\{\theta_n\}_{n=1}^N$ that are updated iteratively to match the (unnormalized) target distribution $\tilde{p}(\theta)$ as
\begin{align}\label{eqn:svgd}
\theta_n^{[l]}\leftarrow & \theta_n^{[l-1]}+\epsilon \phi(\theta_n^{[l-1]}),
\end{align}
where the SVGD update function $\phi(\cdot)$ is optimized to maximize the steepest descent decrease of the KL divergence between $q(\theta) $ and the target distribution $\tilde{p}(\theta)$ as follows
\begin{align}\label{eqn:SVGD_KL}
\phi(\cdot) = \mathop{\arg\max}_{\phi(\cdot) \in \mathcal{H}^d}\bigg\{-\frac{d}{d\epsilon}&\mathrm{KL}(q(\theta)\|\tilde{p}(\theta)) \nonumber\\
&=\mathbb{E}_{\theta \sim q}[\mathrm{trace}(\mathcal{A}_{\tilde{p}} \phi(\theta))]\bigg\}.
\end{align}
As shown in \eqref{eqn:SVGD_KL}, the negative gradient of the KL divergence has the same form as KSD, so we can rewrite  the term $\mathbb{E}_{\theta \sim q}[\mathrm{trace}(\mathcal{A}_{\tilde{p}} \phi(\theta))]$ in the form of HIP as $\langle \phi,\phi^*\rangle_{\mathcal{H}^d}$, where function $\phi^*(\cdot)=\mathbb{E}_{\theta \sim q}[\nabla\log \tilde{p}(\theta) \mathrm{k}(\theta,\cdot)+\nabla \mathrm{k}(\theta,\cdot)]$ and the update function $\phi(\cdot)$ is restricted to zero-centered balls of a RKHS $\mathcal{H}^d$ defined by a kernel $\mathrm{k}(\theta, \theta')$, where the norm of the ball is taken to be $\|\phi^*\|_{\mathcal{H}^d}^2$. Accordingly, the HIP can achieve the maximum value when the update function $\phi(\theta)$ takes the same norm and direction as the function $\phi^*(\theta)$ so that the gradient of the KL divergence can achieve the maximum value, which allows the particles to update towards the target distribution at the fastest speed. Therefore, SVGD takes  $\phi(\cdot)=\mathbb{E}_{\theta \sim q}[\nabla\log \tilde{p}(\theta) \mathrm{k}(\theta,\cdot)+\nabla \mathrm{k}(\theta,\cdot)]$ as the update function for the normalized distribution $p(\theta) \propto \tilde{p}(\theta)$ to update the set of particles $\{\theta_n\}_{n=1}^N$. The vartional distribution is obtained using a \ac{KDE} over the particles as $q(\theta) = \sum_{n=1}^{N} K(\theta, \theta_n)$ and by controlling the number of particles, SVGD can trade off the bias of the vartional distribution and the speed of convergence, providing flexible performance.

\subsubsection{Distributed Stein Variational Gradient Descent} \ac{DSVGD} inherits the performance of SVGD in terms of bias, convergence speed, and per-iteration complexity by controlling the number of particles \cite{kassab2022federated}. At each global iteration $i=1,2,\ldots, I$, DSVGD maintains a set of global particles $\{\theta_n^{(i-1)}\}_{n=1}^N$ at the server side, which represent the approximate posterior $q^{(i-1)}(\theta)$ of the current model parameters, and a set of local particles $\{\theta_{n,k}^{(i-1)}\}_{n=1}^N$ at the client side, which represent the approximate scaled local likelihood $t_k^{(i-1)}(\theta)$. Both the global and local particles are updated using \ac{SVGD} as follows: first, the global particles $\{\theta_n^{(i-1)}\}_{n=1}^N$ are downloaded by the client $k$ and updated by running $L$ local SVGD iterations to approximate the local tilted distribution $\tilde{p}_k^{(i)}(\theta)$ as
\begin{align}\label{eqn:glocal_svgd}
\theta_n^{[l]}\leftarrow\theta_n^{[l-1]} &+ \frac{\epsilon}{N}\sum_{j=1}^{N}[\mathrm{k}(\theta_j^{[l-1]},\theta_n^{[l-1]})\nabla_{\theta_j}\log \tilde{p}_k^{(i)}(\theta_j^{[l-1]}) 
\nonumber \\
&+ \nabla_{\theta_j}\mathrm{k}(\theta_j^{[l-1]},\theta_n^{[l-1]})],
\end{align}
for $l=1,2,\ldots, L$ and where the global approximate posterior is then obtained using a \ac{KDE} of the global particles as 
$q(\theta)=N^{-1}\sum_{n=1}^N\mathrm{K}(\theta,\theta_{n})$.  Similarly, the local particles are updated by running $L'$ local SVGD iterations to approximate the distribution $t_k^{(i)}(\theta)$ as
\begin{align}\label{eqn:local_svgd}
\theta_{k,n}^{[l']}\leftarrow\theta_{k,n}^{[l'-1]} &+ \frac{\epsilon'}{N}\sum_{j=1}^{N}[\mathrm{k}(\theta_{k,j}^{[l'-1]},\theta_{k,n}^{[l'-1]})\nabla_{\theta_j}\log t_k^{(i)}(\theta)\nonumber \\
&+ \nabla_{\theta_j}\mathrm{k}(\theta_{k,j}^{[l'-1]},\theta_{k,n}^{[l'-1]})],
\end{align}
where $t_k^{(i)}(\theta)$ is computed using \eqref{eqn:tk}. To summarize, the process of DSVGD is as follows:
\begin{itemize}
\item{\bf Step 1:} The server serially selects a client and sends the current global particles $\{\theta_n^{(i-1)}\}_{n=1}^N$  to the selected client.
\item{\bf Step 2:} The selected client minimizes the local free energy in \eqref{eqn:local_kl} by running SVGD algorithm (see \eqref{eqn:glocal_svgd}) to obtain the current global particles $\{\theta_n^{(i)}\}_{n=1}^N$.
\item{\bf Step 3:} The updated global particles $\{\theta_n^{(i)}\}_{n=1}^N$ are uploaded to the server and the local particles $\{\theta_{n,k}^{(i-1)}\}_{n=1}^N$ are updated by running SVGD algorithm (see \eqref{eqn:local_svgd}) to obtain the current local particles $\{\theta_{n,k}^{(i)}\}_{n=1}^N$  at the client side.
\item{\bf Step 4:} The server updates the current global particles by setting $\{\theta_n^{(i-1)}\}_{n=1}^N \longleftarrow \{\theta_n^{(i)}\}_{n=1}^N$ and continues to serially select the next client.
\end{itemize}

In this paper, we consider the selection problem for DSVGD which aims to select the client that contributes most to the decrease of the global free energy in each global iteration and thus improving convergence.
\section{Client Selection Schemes for Federated Bayesian Learning}
\label{sec:Scheduling_Schemes_for_Federated_Bayesian_Learning}
In this section, we start by introducing a generic client selection framework for \ac{DSVGD}. Then, we propose two schemes, namely KSD-based scheme and HIP-based scheme. Moreover, we analyze the convergence of \ac{DSVGD} which reveals that both schemes improve the model convergence, where the KSD-based scheme has smaller communication overhead and the HIP-based scheme has faster convergence.
\subsection{Generic Client Selection Framework}

We now introduce the generic client selection framework for federated Bayesian learning where the server estimates the selection distribution $P_k$, for $k\in \{1,2,...,K\}$ which represents the probability of selecting client $k$. At each global iteration $i$,  a client $k\in \{1,2,...,K\}$ is selected based on the distribution $P_k$ to update the global posterior. The step of this framework can be summarized as follows:
\begin{itemize}
\item{\bf Step 1} (Global Posterior Broadcast): The server broadcasts the current global posterior to all clients.
\item{\bf Step 2} (Importance Indicator Report): Each client $k$ computes an indicator and reports the result to the server. This step is going to be detailed later.
\item{\bf Step 3} (Selection Distribution Calculation): Based on the received reports, the server computes the selection distribution $P_k$ of the clients.
\item{\bf Step 4} (Client  Selection): Based on the selection distribution, the server selects a client, and then solicits updates from the selected client.
\item{\bf Step 5} (Global Posterior Update): The  selected client minimizes local free energy in \eqref{eqn:local_kl} to obtain the new global posterior.
\item{\bf Step 6} (Global Posterior Upload and Approximate Scaled Local Likelihood Update): The selected client uploads the global posterior obtained in the previous step to the server and updates the  approximate scaled local likelihood using \eqref{eqn:tk}.
\end{itemize}

Note that the client selection requires the addition of three steps: the importance indicator report (Step 2), selection distribution  calculation (Step 3) and client selection (Step 4) to the traditional federated learning process presented in Sec. II-A. Also, the global posterior needs to be broadcast to all clients at the beginning of each communication iteration, instead of being sent serially to each client.

\subsection{KSD-Based Scheme}
Federated Bayesian learning converges as the variational global posterior $q(\theta)$ gradually approximates each local tilted distribution (see \eqref{eqn:local_kl}), where the discrepancy between each local tilted distribution and the current variational global posterior is different. Here, we measure this discrepancy in terms of KSD, which can be easily estimated from the local tilted distribution and the global particles representing the variational global posterior. A larger KSD between the two distributions indicates that the variational global posterior has not sufficiently approximated the local tilted distribution, and therefore the client may have contributed more to the decrease of the global free energy. A naive idea is to use KSD as an indicator for judging the importance of the client's data, assigning the client with larger KSD value a higher probability of selection. We next analyse the convergence of the global free energy by stating the following lemma.

\begin{lemma}\label{lemma1}
If a client $k$ is selected, from local iteration $l$ to $l+1$ during global iteration $i$, the upper bound of the decrease of the global free energy can be expressed as
\begin{align}\label{eqn:lemma1}
\left(F(q^{[l+1]}(\theta))-F(q^{[l]}(\theta))\right)/\alpha \leq -\epsilon S(q^{[l]},\tilde{p}_k) ~~~~~&\nonumber\\
+2(K-1)l_{max}^{(i)} \sqrt{2\mathrm{KL}(q^{[l+1]}\rVert q^{[l]})}&,
\end{align}
where $l_{max}^{(i)}=\sup_{\theta}\max_{m \ne k}\left| \log (t_m^{(i-1)}(\theta)) \cdot p_m(\theta) \right|$. As a corollary, after $I$ rounds,
\begin{align}
&\left(F(q^{(I)}(\theta)) -F(q^{(0)}(\theta))\right)/\alpha \nonumber\\
&\leq -\epsilon\sum_{i=0}^{I}\sum_{k=1}^KP_k^{(i)}\sum_{l=0}^{L-1} S(q^{[l]},\tilde{p}_k) \nonumber\\
&\quad+\sum_{i=0}^{I}\sum_{k=1}^KP_k^{(i)}\sum_{l=0}^{L-1}2(K-1)l_{max}^{(i)} \sqrt{2\mathrm{KL}(q^{[l+1]}\rVert q^{[l]})},
\end{align}
where $P_k^{(i)}$ is the probability of selecting client $k$ in the $i$-th iteration. 
\end{lemma}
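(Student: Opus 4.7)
The plan is to exploit the fact that $F(q)/\alpha = \mathrm{KL}(q\|\tilde q_{opt})$ by (2) and that at the selected client the SVGD step is designed to descend $\mathrm{KL}(q\|\tilde p_k)$. First I would perform an algebraic decomposition: since $\tilde q_{opt}=p_0\prod_m p_m$, $q=p_0\prod_m t_m$ by (4), and $\tilde p_k=(q/t_k)p_k=p_0\prod_{m\neq k}t_m^{(i-1)}\cdot p_k$ by (6), one obtains $\log \tilde q_{opt}-\log \tilde p_k=\sum_{m\neq k}\log(p_m/t_m^{(i-1)})$, and consequently
\begin{equation*}
F(q)/\alpha=\mathrm{KL}(q\|\tilde p_k)-\sum_{m\neq k}\mathbb{E}_q\!\left[\log\frac{p_m}{t_m^{(i-1)}}\right].
\end{equation*}
Subtracting the value at $l$ from the value at $l+1$ splits $\bigl(F(q^{[l+1]})-F(q^{[l]})\bigr)/\alpha$ into a ``KL-descent'' piece and a ``residual'' piece that only depends on $q^{[l+1]}-q^{[l]}$ integrated against bounded log-ratios.

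For the first piece I would invoke the variational characterization of SVGD given in (11)--(13). Since the client applies one SVGD step of size $\epsilon$ toward $\tilde p_k$, the update direction is $\phi^\ast=\mathbb{E}_{q^{[l]}}[\nabla\log\tilde p_k\,\mathrm{k}+\nabla\mathrm{k}]$, and the rate of decrease of $\mathrm{KL}(q\|\tilde p_k)$ along this direction equals $\|\phi^\ast\|_{\mathcal{H}^d}^2=\mathbb{S}(q^{[l]},\tilde p_k)$ by (10). Integrating this infinitesimal identity over the step (and absorbing the $O(\epsilon^2)$ remainder as is standard for small learning rates) yields $\mathrm{KL}(q^{[l+1]}\|\tilde p_k)-\mathrm{KL}(q^{[l]}\|\tilde p_k)\le -\epsilon\,\mathbb{S}(q^{[l]},\tilde p_k)$. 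For the residual piece I would bound each term by $\|\log(p_m/t_m^{(i-1)})\|_\infty\cdot\|q^{[l+1]}-q^{[l]}\|_1$, estimate the sup-norm by $l_{\max}^{(i)}$ via the definition, and then control the $L^1$ distance by $\|q^{[l+1]}-q^{[l]}\|_1\le \sqrt{2\,\mathrm{KL}(q^{[l+1]}\|q^{[l]})}$ from Pinsker's inequality. Summing over the $K-1$ non-selected clients and picking up the factor of two from bounding $|\log(p_m/t_m)|\le 2 l_{\max}^{(i)}$ (using the triangle inequality on the two terms of $l_{\max}$) delivers the claimed bound (15).

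For the corollary I would telescope over global iterations $F(q^{(I)})-F(q^{(0)})=\sum_{i=0}^{I-1}\sum_{l=0}^{L-1}\bigl(F(q^{[l+1]})-F(q^{[l]})\bigr)$, insert the per-step bound (15), and take the expectation over the client-selection rule inside each global iteration to obtain the factor $\sum_{k=1}^K P_k^{(i)}$ in both terms. I expect the main technical obstacle to be step two, namely converting the infinitesimal SVGD descent identity $-\tfrac{d}{d\epsilon}\mathrm{KL}(q_\epsilon\|\tilde p_k)=\mathbb{S}(q,\tilde p_k)$ into a clean finite-step upper bound $-\epsilon\mathbb{S}$; this requires either taking $\epsilon$ small enough that the remainder is dominated by the Pinsker term, or assuming Lipschitz continuity of $\nabla\log\tilde p_k$ and bounded kernel derivatives to make the $O(\epsilon^2)$ correction explicit. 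The algebraic decomposition and Pinsker-based estimate are routine once that step is in place.
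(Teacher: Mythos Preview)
Your proposal is correct and follows essentially the same route as the paper: the same algebraic decomposition into the local free energy change $(a)=\mathrm{KL}(q^{[l+1]}\|\tilde p_k)-\mathrm{KL}(q^{[l]}\|\tilde p_k)$ plus the residual $(b)=\sum_{m\neq k}\int\log(t_m^{(i-1)}/p_m)(q^{[l+1]}-q^{[l]})\,d\theta$, the same SVGD descent argument for $(a)$ (the paper makes the $O(\epsilon^2)$ remainder explicit via a Taylor expansion of $\log\tilde p_k(T(\theta))$ and a Neumann series for $(\nabla T)^{-1}$, then drops it under the small-$\epsilon$ assumption, exactly the obstacle you flag), and the same $\|\cdot\|_\infty\cdot\|\cdot\|_1$ plus Pinsker bound for $(b)$, which the paper simply cites from \cite{kassab2022federated}. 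The corollary via telescoping and expectation over the selection distribution is also identical.
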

\begin{proof}
See Appendix A.
\end{proof}
The first term on the right hand side of \eqref{eqn:lemma1} corresponds to the upper bound of the decrease of the local free energy, expressed by the KSD between the local tilted distribution and the variational global posterior, while the second term of \eqref{eqn:lemma1} corresponds to the upper bound of the impact of un-selected clients on the global free energy. Therefore, Lemma \ref{lemma1} shows that the decrease of the global free energy is influenced by both the selected clients and un-selected clients, where the selected client influences the decrease of global free energy through the decrease of the local free energy. It can be observed that the larger the client's KSD is, the smaller the upper bound of the local free energy and the greater the contribution to the decrease of the local free energy. The idea is that the client with larger KSD will approximately reduce the upper bound of the decrease of the global free energy while ignoring the effect of un-selected clients, and thus we use KSD to discriminate the importance of the client's data, assigning a higher selection probability to the clients with larger KSD.

\begin{definition}\label{definition1}
The KSD-based selection distribution is defined as: 
\begin{align}\label{eqn:definition1}
P_{{\rm KSD}, k}=\frac{S_k(q,\tilde{p}_k)}{\sum_{m=1}^{K} S_m(q,\tilde{p}_m)}.
\end{align}
\end{definition}
Here we use the KSD between the variational global posterior $q(\theta)$ and the local tilted distribution $\tilde{p}_k(\theta)$ as an indicator for judging the importance of the clients' data. Intuitively, the client with a larger KSD can be prioritized based on the selection distribution in \eqref{eqn:definition1}. More specifically, the server can prioritise the selection of clients whose local tilted distribution is not sufficiently approximated by the variational global posterior.
We present the KSD-based scheme in Algorithm \ref{alg:1} where the step 4 to the step 6 correspond to the extra steps added by the selection framework with respect to the DSVGD algorithm.
\par
\textbf{Comparison to the round robin and random selection schemes:} With the impact of un-selected clients approximately ignored, the KSD-based scheme results in a tighter upper bound of the decrease of the global free energy compared to the round robin and random selection schemes, where the probability of each client being selected is set uniformly to $\frac{1}{K}$. Specifically, defining $\Delta_{\mathrm{KSD}}$ as the upper bound reduction from the KSD-based scheme compared to the round robin and random selection schemes, the bound of the KSD-based scheme is tighter if
\begin{align}
\Delta_{\mathrm{KSD}} = \epsilon \sum_{k=1}^{K}S_k(q,\tilde{p}_k)P_{\mathrm{KSD},k} - \frac{\epsilon}{K}\sum_{k=1}^{K}S_k(q,\tilde{p}_k) \geq 0,
\end{align}
which holds since
\begin{align*}
\epsilon \sum_{k=1}^{K}S_k(q,\tilde{p}_k)&P_{\mathrm{KSD},k} \nonumber\\
= &\epsilon\sum_{k=1}^K \frac{S_k(q,\tilde{p}_k)^2}{\sum_{m=1}^K S_m(q,\tilde{p}_m)} \nonumber \\
\geq &\frac{\epsilon}{K}\sum_{k=1}^{K}S_k(q,\tilde{p}_k),  \quad\mathrm{(Jensen ~Inequality)}
\end{align*}
where the equality holds under the condition $S_1(q,\tilde{p}_k) = S_2(q,\tilde{p}_k) = ... = S_K(q,\tilde{p}_k)$. In such a condition, the difference between each local tilted distribution and the current variational global posterior is the same, indicating that the clients' data is IID. In this case, $\Delta_{\mathrm{KSD}}=0$ and the KSD-based scheme no longer improves the performance of the model.
\begin{algorithm}
	\renewcommand{\algorithmicrequire}{\textbf{Input:}}
	\renewcommand{\algorithmicensure}{\textbf{Output:}}
	\caption{KSD-based client selection for DSVGD}
	\label{alg:1}
    \begin{algorithmic}[1]
        \REQUIRE Prior $p_0(\theta)$, local likelihood $\{p_k\}_{k=1}^K$, temperature $\alpha>0$, kernels $\mathrm{K}(\cdot,\cdot)$ and $\mathrm{k}(\cdot,\cdot)$.
        \ENSURE Global approximate posterior $q(\theta)=N^{-1}\sum_{n=1}^N\mathrm{K}(\theta,\theta_n)$.
        \STATE $\textbf{initialize}$ $q^{(0)}(\theta)=p_0(\theta)$; $\{\theta_n^{(0)}\}_{n=1}^N \overset{\mathrm{i.i.d}}{\sim}p_0(\theta)$; $\{\theta_{k,n}^{(0)}=\theta_n^{(0)}\}_{n=1}^N$ and $t_k^{(0)}(\theta)=1$ for $k=1,...,K$.
        \FOR {$i=1,\cdots$, $I$}
        \STATE Server sends $\{\theta_n^{(i-1)}\}_{n=1}^N$ to all clients;
        \STATE Each client $k$ computes its own KSD $S(q,\tilde{p}_k)$;
        \STATE Each client $k$ sends $S(q,\tilde{p}_k)$ back to the server;
        \STATE Server computes the selection distribution $P_{{\rm KSD}, k}^{(i)}$ using \eqref{eqn:definition1} and $S(q,\tilde{p}_k),k=1,...,K$;
        \STATE Server randomly selects a client $k$ according to $P_{{\rm KSD}, k}^{(i)},k=1,...,K$;
        \STATE Client $k$ obtains the updated global particles $\{\theta_n^{(i)}\}_{n=1}^N$ according to \eqref{eqn:glocal_svgd};
        \STATE Client $k$ sends the updated global particles $\{\theta_n^{(i)}\}_{n=1}^N$ to the server and obtains the updated local particles $\{\theta_{k,n}^{(i)}\}_{n=1}^N$ according to \eqref{eqn:local_svgd}.
        \ENDFOR
        \STATE \textbf{return} $q(\theta)=N^{-1}\sum_{n=1}^N\mathrm{K}(\theta,\theta_n^{(I)})$.
    \end{algorithmic}
\end{algorithm}

\subsection{HIP-based Scheme}
The KSD-based scheme proposed in the previous subsection exploits the divergence between distributions as an indicator of the importance of the clients' data. However, this scheme actually only achieves the minimization of the decrease of the local free energy per iteration, as it ignores the impact of un-selected clients on the global free energy for simplicity. In this subsection, we integrate the impact of  both the selected and un-selected clients (which corresponds to the first and second term on the right hand side in \eqref{eqn:lemma1}) on the global free energy through the HIP between two SVGD update functions.

We start by recalling the update function for particles in SVGD.  According to the previous discussion in Section II-B, the update function is restricted to a zero-centered ball of a RKHS $\mathcal{H}^d$ and is optimized to maximize the negative gradient of the KL divergence, which can be written in the form of the KSD as shown in \eqref{eqn:SVGD_KL},  and in turn can be further rewritten in the form of the HIP as  $\langle \phi,\phi^*\rangle_{\mathcal{H}^d}$  as shown in \eqref{eqn:ksd_rkhs}, where $\phi^*(\cdot)=\mathbb{E}_{\theta \sim q}[\nabla\log \tilde{p}_k(\theta) \mathrm{k}(\theta,\cdot)+\nabla \mathrm{k}(\theta,\cdot)]$ is referred to as the target update function for normalized distribution $p_k(\theta) \propto \tilde{p}_k(\theta)$. When the update function $\phi(\theta)$ is equal in size and direction to the target update function $\phi^*(\theta)$, the HIP can achieve a maximum value, when the norm of the ball is taken to be $\|\phi^*\|_{\mathcal{H}^d}^2$. Similarly, when we choose a different update function, the difference in both the size and direction between it and the target update function in Hilbert space will cause the HIP to change, which in turn affects the speed at which the particles are updated towards the target distribution. Therefore, we can use the HIP between the update function $\phi(\theta)$ and the target update function $\phi(\theta^*)$ to measure the contribution of the particles update. A larger HIP indicates that the update function can make the variational distribution approach the target distribution at a faster speed, which makes a greater contribution to the particles update. This is formalized in the following theorem.

\begin{theorem}\label{theorem1}
Given i.i.d. samples $\left\{\theta_n \right\}_{n=1}^N$ drawn from the distribution  $q(\theta)$ and the score function $S_{p1}$ and $S_{p2}$. We can estimate the HIP between the SVGD update function  $\phi_1(\cdot)=\mathbb{E}_{\theta \sim q}[\nabla\log p_1(\theta) \mathrm{k}(\theta,\cdot)+\nabla \mathrm{k}(\theta,\cdot)]$ for $p_1$ and the SVGD update function  $\phi_2(\cdot)=\mathbb{E}_{\theta \sim q}[\nabla\log p_2(\theta) \mathrm{k}(\theta,\cdot)+\nabla \mathrm{k}(\theta,\cdot)]$ for $p_2$ by using the properties of the RKHS. Define
\begin{align*}
h_{p_1,p_2}(\theta_i,\theta_j)&= S_{p_1}(\theta_i)^T \mathrm{k}(\theta_i,\theta_j)S_{p_2}(\theta_j) \nonumber\\
+& S_{p_1}(\theta_i)^T \nabla_{\theta_j}\mathrm{k}(\theta_i,\theta_j)  
+ \nabla_{\theta_i}\mathrm{k}(\theta_i,\theta_j)^T S_{p_2}(\theta_j) \nonumber\\
+& {\rm trace}(\nabla_{\theta_i,\theta_j}\mathrm{k}(\theta_i,\theta_j)),
 \end{align*}
then the HIP can be estimated as
\begin{align}\label{eqn:theorem1}
\langle \phi_1,\phi_2 \rangle_{\mathcal{H}^d}=\frac{1}{N^2}\sum_{i,j=1}^N[h_{p_1,p_2}(\theta_i,\theta_j)].
\end{align}
\end{theorem}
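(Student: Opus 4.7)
The plan is to approximate the expectations defining $\phi_1$ and $\phi_2$ by empirical averages over the i.i.d.\ samples $\{\theta_n\}_{n=1}^{N}$, and then expand the resulting RKHS inner product by bilinearity and evaluate each term using the (derivative) reproducing property of the kernel $\mathrm{k}$. Concretely, I would first replace the expectations by Monte Carlo estimators
\begin{align*}
\hat{\phi}_1(\cdot) &= \frac{1}{N}\sum_{i=1}^{N}\bigl[ S_{p_1}(\theta_i)\,\mathrm{k}(\theta_i,\cdot) + \nabla_{\theta_i}\mathrm{k}(\theta_i,\cdot) \bigr],\\
\hat{\phi}_2(\cdot) &= \frac{1}{N}\sum_{j=1}^{N}\bigl[ S_{p_2}(\theta_j)\,\mathrm{k}(\theta_j,\cdot) + \nabla_{\theta_j}\mathrm{k}(\theta_j,\cdot) \bigr],
\end{align*}
so that $\langle \phi_1,\phi_2\rangle_{\mathcal{H}^d}$ is estimated by $\langle \hat{\phi}_1,\hat{\phi}_2\rangle_{\mathcal{H}^d}$.

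Next, bilinearity of the inner product decomposes this quantity into $N^{-2}\sum_{i,j}$ of four families of cross-terms indexed by $(i,j)$: a score--score term, two mixed score--gradient terms, and a gradient--gradient term. I would evaluate each family using the standard reproducing identities in the vector-valued RKHS $\mathcal{H}^d$ (viewing each coordinate as an element of the scalar RKHS defined by $\mathrm{k}$):
\begin{align*}
\langle a\,\mathrm{k}(\theta,\cdot),\, b\,\mathrm{k}(\theta',\cdot)\rangle_{\mathcal{H}^d} &= a^{\top} b \,\mathrm{k}(\theta,\theta'),\\
\langle a\,\mathrm{k}(\theta,\cdot),\, \nabla_{\theta'}\mathrm{k}(\theta',\cdot)\rangle_{\mathcal{H}^d} &= a^{\top}\nabla_{\theta'}\mathrm{k}(\theta,\theta'),\\
\langle \nabla_{\theta}\mathrm{k}(\theta,\cdot),\, \nabla_{\theta'}\mathrm{k}(\theta',\cdot)\rangle_{\mathcal{H}^d} &= \mathrm{trace}\bigl(\nabla_{\theta,\theta'}\mathrm{k}(\theta,\theta')\bigr),
\end{align*}
for deterministic vectors $a,b\in\mathbb{R}^d$. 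Substituting $a=S_{p_1}(\theta_i)$ and $b=S_{p_2}(\theta_j)$ into the three identities (and their transpose versions) reproduces exactly the four summands defining $h_{p_1,p_2}(\theta_i,\theta_j)$; summing over $(i,j)$ and dividing by $N^2$ gives the claimed formula \eqref{eqn:theorem1}.

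The main obstacle is to justify the derivative-reproducing identities above, because they require that $\nabla_\theta\mathrm{k}(\theta,\cdot)$ actually lies in $\mathcal{H}^d$ and that inner products against it return gradients of the functions evaluated on the other side. This is standard whenever $\mathrm{k}$ is sufficiently smooth (e.g.\ twice continuously differentiable in both arguments), and can be imported directly from the derivative-reproducing machinery underlying the KSD estimator of Liu \emph{et al.}; indeed, Theorem~\ref{theorem1} specialises to their KSD closed form when $p_1=p_2$, which gives a useful sanity check on the coefficients and signs. Once these identities are in place, the remainder is a routine bilinear expansion.
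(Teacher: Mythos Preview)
Your proposal is correct and will produce exactly the claimed formula. It is, however, a genuinely different route from the paper's proof. You work at the \emph{sample} level: form the empirical $\hat\phi_1,\hat\phi_2$, expand $\langle\hat\phi_1,\hat\phi_2\rangle_{\mathcal{H}^d}$ by bilinearity, and close each of the four cross-terms using the reproducing and derivative-reproducing identities of $\mathrm{k}$. The paper instead works at the \emph{population} level and avoids the derivative-reproducing property altogether: it first invokes Stein's identity to rewrite each factor as $\mathbb{E}_{\theta\sim q}[(S_{p_r}(\theta)-S_q(\theta))\mathrm{k}(\theta,\cdot)]$, so that only the ordinary reproducing property $\langle\mathrm{k}(\theta,\cdot),\mathrm{k}(\theta',\cdot)\rangle_{\mathcal{H}}=\mathrm{k}(\theta,\theta')$ is needed to obtain $\mathbb{E}_{\theta,\theta'}[(S_{p_1}-S_q)^\top\mathrm{k}(\theta,\theta')(S_{p_2}-S_q)]$; it then shows that $v(\theta,\theta')=\mathrm{k}(\theta,\theta')S_{p_2}(\theta')+\nabla_{\theta'}\mathrm{k}(\theta,\theta')$ lies in the Stein class of $q$ and applies Stein's identity a second time to unfold back to the four-term expression $h_{p_1,p_2}$, and only at the very end replaces the double expectation by the V-statistic $N^{-2}\sum_{i,j}$.

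Your approach is shorter and more transparent, and the derivative-reproducing identities you rely on are indeed standard for smooth kernels (and are exactly what underlies the KSD estimator you cite as a sanity check). The paper's approach trades those identities for two integration-by-parts steps plus a Stein-class verification; this keeps the argument closer in spirit to the original KSD derivation of Liu et al., but is more circuitous for the present purpose. Either argument is acceptable.
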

\begin{proof}
See Appendix B.
\end{proof}
Theorem \ref{theorem1} reveals that we can estimate the HIP between the SVGD update function for $p_1$ and the target update function for target  distribution $p_2$ using \eqref{eqn:theorem1}, and thus use the HIP estimate to measure the contribution of the SVGD update function for $p_1$ to the update of particles towards the target distribution $p_2$.

\begin{lemma}\label{lemma2}
Given $\left \langle \phi_k,\phi_m \right \rangle_{\mathcal{H}^d}$ representing the HIP between the update functions for the likelihoods of client $k$ and client $m$, the upper bound of the decrease of the global free energy per iteration can be written as
\begin{align}\label{eqn:lemma2}
\left(F(q^{[l+1]}(\theta))-F(q^{[l]}(\theta))\right) /&\alpha\leq \nonumber\\
-\epsilon \sum_{m=1}^K \left \langle \phi_k,\phi_m \right \rangle_{\mathcal{H}^d} + \sum_{m=1}^K &\mathrm{KL}(q^{[l]}\rVert t_m^{(i-1)})+\log C,
\end{align}
where $C=\prod_{m=1}^K C_m$ is a normalization constant such that $\int\frac{t_m(\theta)}{C_k} \mathrm{d}\theta = 1$ for $m=1,\ldots, K$.
\end{lemma}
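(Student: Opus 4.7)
The plan is to bound the one-step change $(F(q^{[l+1]}) - F(q^{[l]}))/\alpha = \mathrm{KL}(q^{[l+1]} \| \tilde q_{opt}) - \mathrm{KL}(q^{[l]} \| \tilde q_{opt})$ along the SVGD trajectory on the selected client $k$. First I would express the SVGD step as the push-forward $T(\theta) = \theta + \epsilon \phi_k(\theta)$, so that $q^{[l+1]} = T_{\#} q^{[l]}$, and Taylor-expand $\mathrm{KL}(T_{\#} q^{[l]} \| \tilde q_{opt})$ in $\epsilon$. The standard SVGD identity then yields $-\frac{d}{d\epsilon}\mathrm{KL}(T_{\#} q^{[l]} \| \tilde q_{opt})\big|_{\epsilon=0} = \mathbb{E}_{q^{[l]}}[\mathrm{trace}(\mathcal{A}_{\tilde q_{opt}} \phi_k)]$, which is the object I will ultimately relate to the inner products $\langle \phi_k, \phi_m\rangle_{\mathcal{H}^d}$.

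Because $\nabla \log \tilde q_{opt} = \nabla \log p_0 + \sum_{m=1}^K \nabla \log p_m$, I would split the Stein operator additively and invoke the RKHS reproducing identity $\mathbb{E}_{q^{[l]}}[\mathrm{trace}(\mathcal{A}_{p_m} \phi_k)] = \langle \phi_k, \phi_m \rangle_{\mathcal{H}^d}$, which is exactly the property used to prove Theorem~\ref{theorem1}. This extracts the leading term $-\epsilon \sum_{m=1}^K \langle \phi_k, \phi_m \rangle_{\mathcal{H}^d}$, leaving residuals that contain the $p_0$-score contribution, the higher-order Taylor remainder, and an arithmetic mismatch between $\mathcal{A}_{\tilde q_{opt}}$ (which carries a single $\nabla \phi_k$ term) and $\sum_m \mathcal{A}_{p_m}$ (which implicitly carries $K$ copies).

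The next step is to absorb all of these residual pieces into the advertised slack $\sum_{m=1}^K \mathrm{KL}(q^{[l]} \| t_m^{(i-1)}) + \log C$. The key manipulation uses the invariant $q^{[l]} = p_0 \prod_m t_m^{(i-1)}$, which holds throughout global round $i$, to rewrite $\log q^{[l]}$ as a sum over $m$ and then to add and subtract $\sum_m \mathbb{E}_{q^{[l]}}[\log(t_m^{(i-1)}/C_m)]$. Combined with the identity $\sum_m \mathrm{KL}(q^{[l]} \| t_m^{(i-1)}/C_m) = \sum_m \mathrm{KL}(q^{[l]} \| t_m^{(i-1)}) + \log C$ and the nonnegativity of KL applied to any leftover cross-terms, this delivers the stated inequality.

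The hard part will be the bookkeeping in the previous paragraph: carefully tracking how the mismatch between $\mathcal{A}_{\tilde q_{opt}}$ and $\sum_m \mathcal{A}_{p_m}$, the $p_0$-score, and the Taylor remainder collapse exactly into $\sum_m \mathrm{KL}(q^{[l]} \| t_m^{(i-1)}) + \log C$ with the correct sign. Ensuring that this slack absorbs rather than cancels the leading inner-product term, and that the Jensen-type estimate points in the right direction, is the main technical obstacle.
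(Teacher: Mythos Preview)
Your plan diverges from the paper's route and, as written, has a real gap in the step you yourself flag as the hard part.

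The paper does \emph{not} Taylor-expand $\mathrm{KL}(\cdot\|\tilde q_{opt})$ and then split $\mathcal{A}_{\tilde q_{opt}}$ additively. Instead it reuses the Lemma~\ref{lemma1} decomposition
\[
\bigl(F(q^{[l+1]})-F(q^{[l]})\bigr)/\alpha=(a)+\sum_{m\neq k}(b),
\]
where $(a)$ is the local free-energy change and each $(b)=\int\log\frac{t_m^{(i-1)}}{p_m}\,(q^{[l+1]}-q^{[l]})\,\mathrm{d}\theta$. For $(a)$ it separates off $\mathrm{KL}(q^{[l+1]}\|p_k)-\mathrm{KL}(q^{[l]}\|p_k)$, bounds that by $-\epsilon\langle\phi_k,\phi_k\rangle_{\mathcal{H}^d}$ via the Lemma~\ref{lemma1} SVGD estimate, and controls the leftover by $\mathrm{KL}(q^{[l]}\|t_k^{(i-1)})+\log C_k$ using Jensen. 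For each $(b)$ it adds and subtracts $\mathbb{E}_{q^{[\cdot]}}[\log(q^{[\cdot]}/p_m)]$ to manufacture $\mathrm{KL}(q^{[l+1]}\|p_m)-\mathrm{KL}(q^{[l]}\|p_m)$, which the same SVGD machinery plus the RKHS reproducing identity turns into $-\epsilon\langle\phi_k,\phi_m\rangle_{\mathcal{H}^d}$; the remainder is again $\mathrm{KL}(q^{[l]}\|t_m^{(i-1)})+\log C_m$. Summing over $m$ gives the lemma. The point is that each inner product comes from its own one-step KL change against $p_m$, so there is exactly one $\nabla\phi_k$ per client and no $p_0$ leftover.

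Your direct split produces the residual $-\epsilon\,\mathbb{E}_{q^{[l]}}\bigl[\mathrm{trace}\bigl(\nabla\log p_0\,\phi_k^{\top}-(K-1)\nabla\phi_k\bigr)\bigr]$, an $O(\epsilon)$ quantity that depends on $p_0$ and the current particles but has no structural link to the $t_m^{(i-1)}$. There is no Jensen- or KL-nonnegativity step that converts this into $\sum_m\mathrm{KL}(q^{[l]}\|t_m^{(i-1)})+\log C$; at best you would be arguing that a small-$\epsilon$ term is dominated by a fixed slack, which is not the content of the bound. Moreover, the ``invariant'' $q^{[l]}=p_0\prod_m t_m^{(i-1)}$ that you rely on holds only at $l=0$; after local SVGD steps the particles have moved and this factorization no longer describes $q^{[l]}$, so it cannot be used to rewrite $\log q^{[l]}$ for general $l$. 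The paper sidesteps both issues because the $t_m$-slack arises from adding and subtracting $\log t_m^{(i-1)}$ inside each $(b)$ term, not from any product representation of $q^{[l]}$.
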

\begin{proof}
See Appendix C.
\end{proof}
Here in Lemma \ref{lemma2}, and in contrast with Lemma \ref{lemma1}, only the first term in the upper bound is related to the selected client, and thus,  the second term and the third term can be ignored. The first term is written in the from of the HIP, which is the sum of HIP between the update functions of the selected client $k$ and all clients including both the selected and un-selected clients. Specifically, the HIP term can be divided into $\left \langle \phi_k,\phi_k \right \rangle_{\mathcal{H}^d}$ and $\sum_{m\neq k}\left \langle \phi_k,\phi_m \right \rangle_{\mathcal{H}^d}$. The term $\left \langle \phi_k,\phi_k \right \rangle_{\mathcal{H}^d}$ can be rewritten in the form of KSD as $S(q,p_k)$, which represents the discrepancy between the local likelihood $p_k$ of the selected client $k$ and the variational global posterior $q$ represented by the global particles. In addition, the impact of an un-selected client $m$ is expressed as $\left \langle \phi_k,\phi_m \right \rangle_{\mathcal{H}^d}$, which is the contribution of the SVGD update functions of the selected client $k$ to the update of particles towards the local likelihood $p_m$ of un-selected client $m$. Therefore, the term $\sum_{m \neq k} \left \langle \phi_k,\phi_m \right \rangle_{\mathcal{H}^d}$ manifest the impact of all un-selected clients by aggregating the contributions of the selected client $k$ that causes particles to update in different directions. Lemma \ref{lemma2} reveals that the larger the HIP term the smaller the upper bound on the decrease of the global free energy per iteration, indicating that the selected client can make a greater contribution to the decrease of the global free energy. We now state a lemma that will prove useful to further simplify the upper bound in lemma \ref{lemma2}.

\begin{lemma}\label{lemma3}
Given the distribution $p'=\left(\prod_{m=1}^K p_m \right)^{\frac{1}{K}}$ and the SVGD update function for $p'$ as $\phi'(\cdot)=\mathbb{E}_{\theta \sim q}[\nabla\log p'(\theta) \mathrm{k}(\theta,\cdot)+\nabla \mathrm{k}(\theta,\cdot)]$, we can further rewrite the HIP term $\sum_{m=1}^K \left \langle \phi_k,\phi_m \right \rangle_{\mathcal{H}^d}$ as
\begin{align}\label{eqn:lemma3}
\sum_{m=1}^K \left \langle \phi_k,\phi_m \right \rangle_{\mathcal{H}^d} = K \cdot \left \langle \phi_k,\phi' \right \rangle_{\mathcal{H}^d}.
\end{align}
\end{lemma}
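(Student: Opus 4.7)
The plan is to reduce the identity to a direct calculation that exploits two ingredients: (i) linearity of the Hilbert inner product $\langle\cdot,\cdot\rangle_{\mathcal{H}^d}$ in its second argument, and (ii) the algebraic fact that taking a geometric mean of the $p_m$'s corresponds to taking an arithmetic mean of their log-gradients. Since the SVGD update function $\phi_m$ depends on $p_m$ only through $\nabla\log p_m$, summing the $\phi_m$'s will collapse into a single update function directed at $p'$, up to a factor of $K$.

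First I would pull the sum inside the inner product, writing
\begin{equation*}
\sum_{m=1}^{K}\langle \phi_k,\phi_m\rangle_{\mathcal{H}^d}
= \Big\langle \phi_k,\ \sum_{m=1}^{K}\phi_m\Big\rangle_{\mathcal{H}^d}.
\end{equation*}
Next I would compute $\sum_{m=1}^{K}\phi_m$ directly from the definition given before Lemma~\ref{lemma2}, using linearity of expectation and gradient:
\begin{equation*}
\sum_{m=1}^{K}\phi_m(\cdot)=\mathbb{E}_{\theta\sim q}\!\Big[\Big(\sum_{m=1}^{K}\nabla\log p_m(\theta)\Big)\,\mathrm{k}(\theta,\cdot) + K\,\nabla\mathrm{k}(\theta,\cdot)\Big].
\end{equation*}

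The crucial step is to recognize that from $p'=\bigl(\prod_{m=1}^{K}p_m\bigr)^{1/K}$ we get $\log p'=\tfrac{1}{K}\sum_{m=1}^{K}\log p_m$, so that $\sum_{m=1}^{K}\nabla\log p_m(\theta)=K\,\nabla\log p'(\theta)$. Substituting this into the previous display and factoring $K$ out of the expectation gives $\sum_{m=1}^{K}\phi_m=K\,\phi'$, and the identity \eqref{eqn:lemma3} then follows by one more application of linearity of the inner product.

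I do not expect any serious obstacle; the argument is essentially one line once the geometric-mean/log-gradient identity is noticed, and the only care needed is to make sure the $K\,\nabla\mathrm{k}(\theta,\cdot)$ term is accounted for (it is precisely what matches the prefactor in $\phi'$, so no extra remainder appears). Potential subtleties that I would briefly verify but expect to be benign are the interchangeability of the finite sum with expectation and gradient (both trivially valid) and the fact that $p'$ need not be normalized: this is harmless because $\phi'$ depends on $p'$ only through $\nabla\log p'$, which is insensitive to a multiplicative normalization constant.
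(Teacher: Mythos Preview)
Your proposal is correct, and the core insight---that $\sum_{m=1}^{K}\nabla\log p_m = K\,\nabla\log p'$ because $\log p'=\tfrac{1}{K}\sum_m\log p_m$---is exactly what the paper uses. The execution differs slightly: the paper does not pull the sum inside the inner product and argue that $\sum_m\phi_m=K\phi'$ as RKHS elements; instead it invokes the explicit double-expectation representation $\langle\phi_k,\phi_m\rangle_{\mathcal{H}^d}=\mathbb{E}_{\theta,\theta'\sim q}[h_{p_k,p_m}(\theta,\theta')]$ established in Theorem~\ref{theorem1}, expands each of the four terms of $h_{p_k,p_m}$, sums over $m$ at the level of score functions $S_{p_m}$, and then recognizes the result as $K\cdot\mathbb{E}_{\theta,\theta'\sim q}[h_{p_k,p'}(\theta,\theta')]$. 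Your route is shorter and avoids any dependence on Theorem~\ref{theorem1}; the paper's route has the minor advantage of keeping everything in the concrete estimator form that is actually computed in the algorithm. Either way the argument is essentially one line once the geometric-mean/score-average identity is noted.
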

\begin{proof}
See Appendix D.
\end{proof}
Lemma \ref{lemma3} transforms the sum of HIP into the HIP between the update function for the likelihood of client $k$ and the update function of the global averaged likelihood  $p'=\left(\prod_{m=1}^K p_m \right)^{\frac{1}{K}}$. According to the previous discussion, since $p'$ is a geometric average of the local likelihoods across $K$ clients,  $p'$ contains the likelihood information of all clients. Therefore, we define the global averaged likelihood as the target distribution, the sum of HIP in \eqref{eqn:lemma2} represents the contribution of client $k$ to the update of the particles towards the global averaged likelihood, which contains information about all clients. 

\begin{theorem}\label{theorem2}
Using Lemma \ref{lemma3}, we can further simplify the upper bound of the decrease of the global free energy in \eqref{eqn:lemma2} as
\begin{align}\label{eqn:theorem2}
\left(F(q^{[l+1]}(\theta))-F(q^{[l]}(\theta))\right)/\alpha\leq &-\epsilon K \cdot \left \langle \phi_k,\phi' \right \rangle_{\mathcal{H}^d} \nonumber\\
+ \sum_{m=1}^K \mathrm{KL} &(q^{[l]}\rVert t_m^{(i-1)}) + \log C.
\end{align}
As a corollary, after $I$ rounds,
\begin{align}
&\left(F(q^{(I)}(\theta)) -F(q^{(0)}(\theta))\right)/\alpha \nonumber\\
&\leq -\epsilon \sum_{i=0}^{I}\sum_{k=1}^KP_k^{(i)}\sum_{l=0}^{L-1} K \cdot \left \langle \phi_k,\phi' \right \rangle_{\mathcal{H}^d}  \nonumber\\
&\quad+ \sum_{i=0}^{I}\sum_{l=0}^{L-1}\left(\sum_{m=1}^K \mathrm{KL}(q^{[l]}\rVert t_m^{(i-1)}) + \log C\right),
\end{align}
where $P_k^{(i)}$ is the probability of selecting client $k$ in the $i$-th iteration. 
\end{theorem}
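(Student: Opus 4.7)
The plan is to obtain the per-iteration bound by directly substituting Lemma \ref{lemma3} into the bound of Lemma \ref{lemma2}, and then to derive the corollary by telescoping the per-iteration bound across global and local iterations while averaging over the client selection distribution $P_k^{(i)}$.

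First I would recall that Lemma \ref{lemma2} already provides the one-step upper bound
\begin{align*}
\bigl(F(q^{[l+1]})-F(q^{[l]})\bigr)/\alpha \leq &-\epsilon \sum_{m=1}^K \langle \phi_k,\phi_m\rangle_{\mathcal{H}^d} \\
&+ \sum_{m=1}^K \mathrm{KL}(q^{[l]}\rVert t_m^{(i-1)})+\log C,
\end{align*}
in which the HIP sum carries the entire dependence on the identity of the selected client $k$. Lemma \ref{lemma3} identifies this sum with $K\langle \phi_k,\phi'\rangle_{\mathcal{H}^d}$, where $\phi'$ is the SVGD update function associated with the geometric-mean distribution $p'=\bigl(\prod_{m=1}^K p_m\bigr)^{1/K}$. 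Substituting this identity into Lemma \ref{lemma2} immediately yields the stated per-iteration bound \eqref{eqn:theorem2}; no additional analytic work is required at this stage.

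For the corollary I would proceed by telescoping. Summing the per-iteration inequality over the $L$ local SVGD steps within global iteration $i$, and then over the $I$ global iterations, produces a telescoping left-hand side $F(q^{(I)})-F(q^{(0)})$ after dividing by $\alpha$. On the right-hand side, the HIP term depends on which client is selected at iteration $i$; to obtain the expression in the theorem I would take expectation over the selection rule, noting that client $k$ is chosen with probability $P_k^{(i)}$, so that the HIP contribution at iteration $i$ becomes $\sum_{k=1}^{K} P_k^{(i)}\sum_{l=0}^{L-1} K \langle \phi_k,\phi'\rangle_{\mathcal{H}^d}$. The remaining $\mathrm{KL}$ and $\log C$ terms are independent of the selected client and simply accumulate across iterations.

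The main obstacle, to the extent there is one, is bookkeeping rather than new analysis: one must be careful that the telescoping is valid even though $q^{[l]}$ inside a global round is re-indexed at each $i$ (with $q^{[0]}$ at round $i$ coinciding with $q^{[L]}$ at round $i-1$), and that the expectation over $P_k^{(i)}$ is taken conditionally on the history of particles and scaled likelihoods $t_m^{(i-1)}$ so that these can be treated as deterministic within round $i$. Once these are handled, the stated corollary follows directly, and no further approximation beyond those already present in Lemmas \ref{lemma2} and \ref{lemma3} is needed.
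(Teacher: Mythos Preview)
Your proposal is correct and matches the paper's approach: the paper treats Theorem \ref{theorem2} as an immediate consequence of substituting the identity from Lemma \ref{lemma3} into the bound of Lemma \ref{lemma2}, with the corollary obtained by the standard telescoping and expectation argument you describe (the paper does not even spell this last step out). Your remarks about the bookkeeping for the telescoping and the conditional expectation are accurate and, if anything, more careful than what the paper provides.
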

Here only the HIP term $\left \langle \phi_k,\phi' \right \rangle_{\mathcal{H}^d}$ is relevant to the client selection. Theorem \ref{theorem2} reveals that the upper bound is smaller when the HIP term $\left \langle \phi_k,\phi' \right \rangle_{\mathcal{H}^d}$ is larger, indicating that the client has a greater contribution to the global particles update if the client can get the particles to update towards the global averaged likelihood $p'$, which contains information about all clients, at a faster speed. Therefore, we consider the HIP term $\left \langle \phi_k,\phi' \right \rangle_{\mathcal{H}^d}$  as a measure of the importance of the clients' data and assign higher selection probabilities to the clients with larger HIP.
\begin{definition}\label{definition2}
The HIP-based selection distribution is defined as: 
\begin{align}\label{eqn:definition2}
P_{{\rm HIP},k}=\frac{\left \langle \phi_k,\phi' \right \rangle_{\mathcal{H}^d}}{\sum_{m=1}^K \left \langle \phi_m,\phi' \right \rangle_{\mathcal{H}^d}}.
\end{align}
\end{definition}
Here we use the HIP between the update function for local likelihood and the update function for the global averaged likelihood $p'=\left(\prod_{m=1}^K p_m \right)^{\frac{1}{K}}$ as an indicator for judging the importance of a client's  data. Intuitively, using \eqref{eqn:definition2}, the server can prioritize the client who updates particles towards the global averaged likelihood at a faster speed, maximizing the decrease of the global free energy per iteration. Finally, we present the HIP-based scheme in Algorithm \ref{alg:2} where the step 4 to the step 6 corresponds the extra steps with respect to DSVGD and are added for selecting.
\par
\textbf{Comparison to the round robin and random selection schemes:} The HIP-based scheme results in a tighter upper bound of the decrease of the global free energy compared to the round robin and random selection schemes. Specifically, we define $\Delta_{\mathrm{HIP}}$ as the upper bound reduction from the HIP-based scheme compared to the round robin and random selection schemes, the bound of the HIP-based scheme is tighter if
\begin{align}
\Delta_{\mathrm{HIP}} = \epsilon \sum_{k=1}^{K}\left \langle \phi_k,\phi' \right \rangle_{\mathcal{H}^d}P_{\mathrm{HIP},k} - \frac{\epsilon}{K}\sum_{k=1}^{K}\left \langle \phi_k,\phi' \right \rangle_{\mathcal{H}^d} \geq 0,
\end{align}
which holds since
\begin{align*}
\epsilon \sum_{k=1}^{K}\left \langle \phi_k,\phi' \right \rangle_{\mathcal{H}^d}&P_{\mathrm{HIP},k} \nonumber\\
= &\epsilon\sum_{k=1}^K \frac{\left \langle \phi_k,\phi' \right \rangle_{\mathcal{H}^d}^2}{\sum_{m=1}^K \left \langle \phi_m,\phi' \right \rangle_{\mathcal{H}^d}} \nonumber \\
\geq &\frac{\epsilon}{K}\sum_{k=1}^{K}\left \langle \phi_k,\phi' \right \rangle_{\mathcal{H}^d}, \quad \mathrm{(Jensen ~Inequality)}
\end{align*}
where the equality holds under the condition $\left \langle \phi_1,\phi' \right \rangle_{\mathcal{H}^d} = \left \langle \phi_2,\phi' \right \rangle_{\mathcal{H}^d} = ... = \left \langle \phi_K,\phi' \right \rangle_{\mathcal{H}^d}$. In such a condition, the HIP between the local update function of each client and the global update function is the same, indicating that the clients' data is IID. In this case, $\Delta_{\mathrm{HIP}}=0$ and the HIP-based scheme no longer improves the performance of the model.

\begin{algorithm}
	\renewcommand{\algorithmicrequire}{\textbf{Input:}}
	\renewcommand{\algorithmicensure}{\textbf{Output:}}
	\caption{HIP-based client selection scheme for DSVGD}
	\label{alg:2}
    \begin{algorithmic}[1]
        \REQUIRE Prior $p_0(\theta)$, local likelihood $\{p_k\}_{k=1}^K$, temperature $\alpha>0$, kernels $\mathrm{K}(\cdot,\cdot)$ and $\mathrm{k}(\cdot,\cdot)$.
        \ENSURE Global approximate posterior $q(\theta)=N^{-1}\sum_{n=1}^N\mathrm{K}(\theta,\theta_n)$.
        \STATE $\textbf{initialize}$ $q^{(0)}(\theta)=p_0(\theta)$; $\{\theta_n^{(0)}\}_{n=1}^N \overset{\mathrm{i.i.d}}{\sim}p_0(\theta)$; $\{\theta_{k,n}^{(0)}=\theta_n^{(0)}\}_{n=1}^N$ and $t_k^{(0)}(\theta)=1$ for $k=1,...,K$.
        \FOR {$i=1,\cdots$, $I$}
        \STATE Server sends $\{\theta_n^{(i-1)}\}_{n=1}^N$ to all clients;
        \STATE Each client  $k$ computes its own score function gradient $\{\nabla_{\theta_n}\log p_k\}_{n=1}^N$;
        \STATE Each client $k$ sends $\{\nabla_{\theta_n}\log p_k\}_{n=1}^N$ back to the server;
        \STATE Server computes the client selection probability $P_{{\rm HIP},k}^{(i)}$ using \eqref{eqn:theorem1}, \eqref{eqn:definition2} and $\{\nabla_{\theta_n}\log p_k\}_{n=1}^N,k=1,...,K$;
        \STATE Server randomly selects a client $k$ according to $P_{{\rm HIP},k}^{(i)},k=1,...,K$;
        \STATE Client $k$ obtains updated global particles $\{\theta_n^{(i)}\}_{n=1}^N$ according to \eqref{eqn:glocal_svgd};
        \STATE Client $k$ sends the updated global particles $\{\theta_n^{(i)}\}_{n=1}^N$ to the server and obtains the updated local particles $\{\theta_{k,n}^{(i)}\}_{n=1}^N$ according to \eqref{eqn:local_svgd}.
        \ENDFOR
        \STATE \textbf{return} $q(\theta)=N^{-1}\sum_{n=1}^N\mathrm{K}(\theta,\theta_n^{(I)})$.
    \end{algorithmic}
\end{algorithm}

\section{Experimental Results}
\label{sec:Experimental_Results}
In this section, we conduct experiments to validate the theoretical analysis and test the performance of the proposed schemes.
\subsection{Experiment Settings}
As in \cite{kassab2022federated} and \cite{10.5555/3157096.3157362}, for all our experiments with SVGD and DSVGD, we use the Radial Basis Function (RBF) kernel $\mathrm{k}(x,x_0)=exp(-\|x-x_0\|_2^2/h)$ and take the bandwidth to be $h=\mathrm{med}^2/\log n$ where $\mathrm{med}$ is the median of the pairwise distance between the particles in the current iterate, so that the bandwidth $h$ changes according to the set of particles adaptively across the iterations. As for the KSD estimator, we use the Gaussian kernel $\mathrm{K}(\cdot,\cdot)$ where the bandwidth is set to 0.55. We use AdaGrad for step size and initialize the particles using the prior distribution unless otherwise specified. Finally, we fix the temperature parameter $\alpha=1$ in \eqref{eqn:glo_energy} throughout the iterations.

For exposition, we consider the learning task of training classifiers. Two prevalent learning models of Bayesian logistic regression (BLR) \cite{gershman2012nonparametric} and Bayesian Neural Networks (BNN) \cite{hernandez2015probabilistic} are employed for implementation. First, We consider BLR model for binary classification using the same setting as in \cite{gershman2012nonparametric}. The model parameters is define as $\theta=[\mathbf{w},\log(\xi)]$ where parameter $\xi$ is a precision and the regression weights $\mathbf{w}$ is assigned with a Gaussian prior $p_0(\mathbf{w}|\xi)=\mathcal{N}(\mathbf{w}|\mathbf{0},\xi^{-1}\mathbf{I}_d)$ and $p_0(\xi)=\mathrm{Gamma}(\xi|1,0.01)$. The local likelihood at each client $k$ is given as $p_k=\prod_{x_i,y_i\in D_k}p(y_i|x_i,\mathbf{w},\xi)$, where $D_k$ is the dataset at client $k$ with covariates $x_k\in \mathbb{R}^d$ and label $y_k\in \{-1,1\}$. Similarly, we consider multi-label classification with BNN model using the same setting as in \cite{kassab2022federated}.  The model parameters $\theta=[\mathbf{W},\log(\gamma)]$ include a precision parameter $\gamma$ along with the weights of neural network $\mathbf{W}$ assigned with a Gaussian prior $p_0(\mathbf{W}|\lambda)=\mathcal{N}(\mathbf{W}|\mathbf{0},\lambda^{-1}\mathbf{I}_d)$ with a fixed precision $\lambda=e$. The data of client $k$ is given as $D_k=\{x_i,y_i\}_{i=1}^K$ and $y_n$ is obtained as $y_n=f(x_n; \mathbf{W})+n$ where the neural network output $f(\cdot;\mathbf{W})$ is corrupted by additive noise variables $n$, where $n\sim\mathcal{N}(0,\gamma^{-1})$. The number of hidden layers containing 100 hidden units is set to one and RELU is set as the activation function. The local likelihood at each client $k$ is given as $p_k=\prod_{x_i,y_i\in D_k}p(y_i|x_i,\mathbf{W},\gamma)$.

To better simulate the data distribution, we consider the non-IID data partitioning way as follows. For the BLR model, we choose the Covertype dataset, which contains two types of labels $y\in \{-1,1\}$. We define the number of clients as 30 and 120, where the ratio of the two categories in each client's data is 1:9. For the BNN model, we choose nine typical classes from digit ``0'' to ``8'' in the well-known dataset MNIST for classification and define the number of clients as 27 and 120, where each client owns only three types of dataset, making the data distribution over clients a pathological non-IID manner. Moreover, since a pathological non-IID distribution of clients data, we set the number of local iterations to 10 to ensure that the model converges.

To demonstrate the effectiveness of the proposed selection schemes, we implement two baseline schemes in the following experiments, the round robin selection scheme and the random selection scheme. Specifically, the round robin selection scheme is used by the authors of \cite{kassab2022federated} in DSVGD, which selects each client in a certain order, and the random selection scheme is the more common selection scheme in federated learning, where the server randomly selects a client at each iteration. In addition, since the communication load caused by HIP-based selection scheme is similar to that of the parallel DSVGD algorithm, we also conduct experiment to compare the model performance between the two schemes.

\subsection{KSD-Based Selection Scheme}
\begin{figure}[htb]
  \centering 
  \hspace{-1cm}
  \subfigure[30 clients]{ 
    \includegraphics[width=4.9cm]{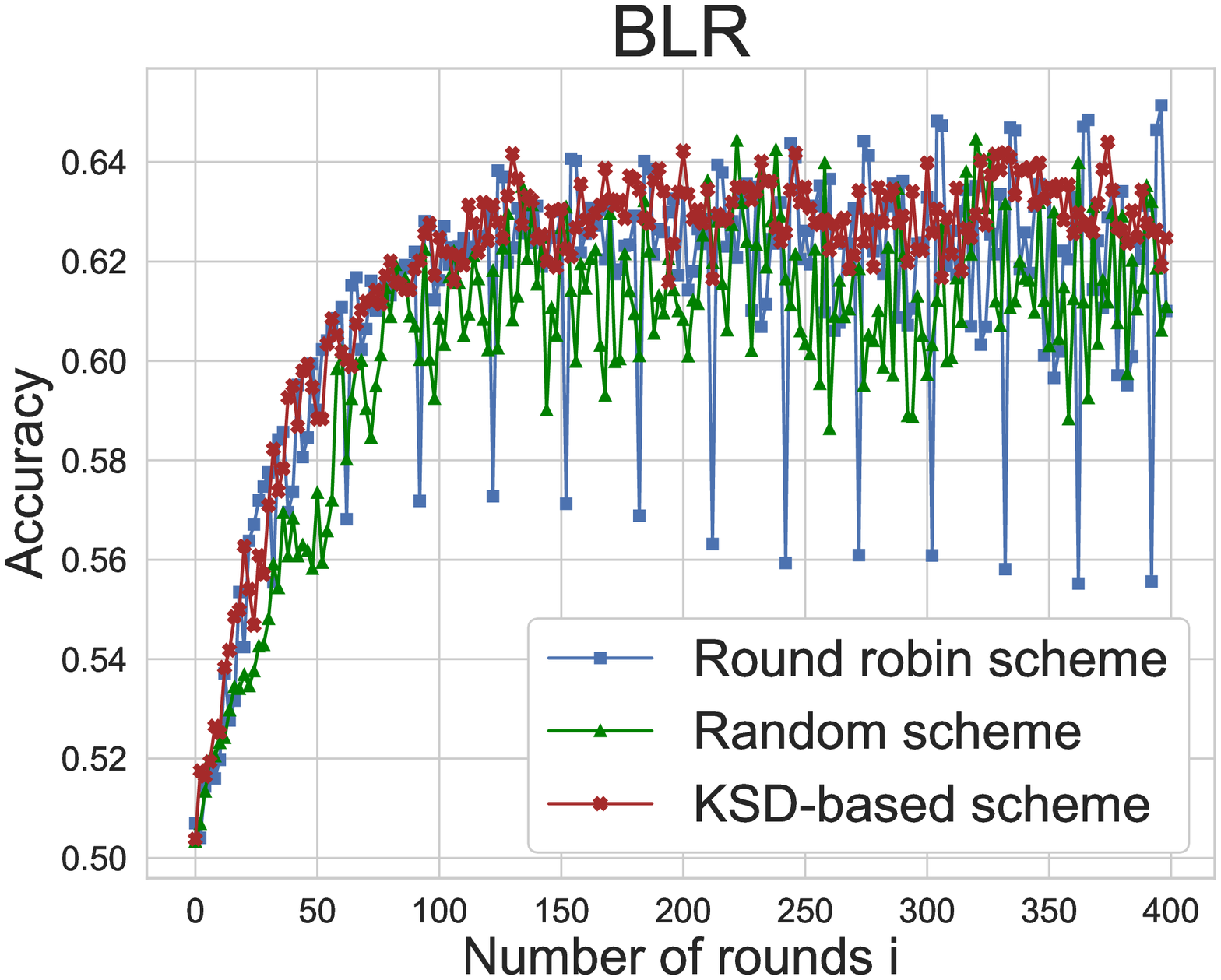} 
  } 
  \hspace{-0.8cm}
  \subfigure[120 clients]{ 
    \includegraphics[width=4.9cm]{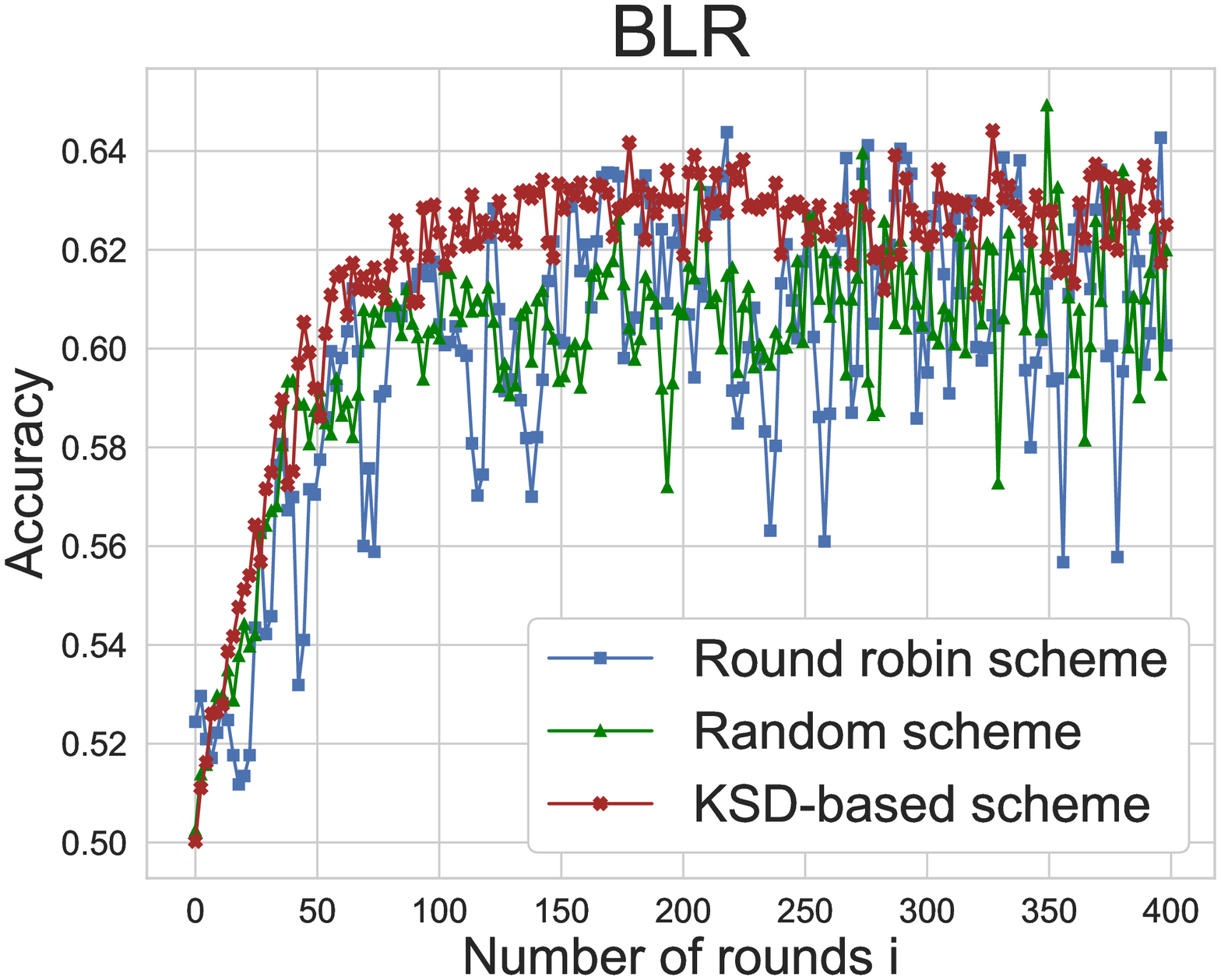} 
  } 
  \hspace{-1cm}
 
  \centering 
  \hspace{-1cm}
  \subfigure[27 clients]{ 
    \includegraphics[width=4.9cm]{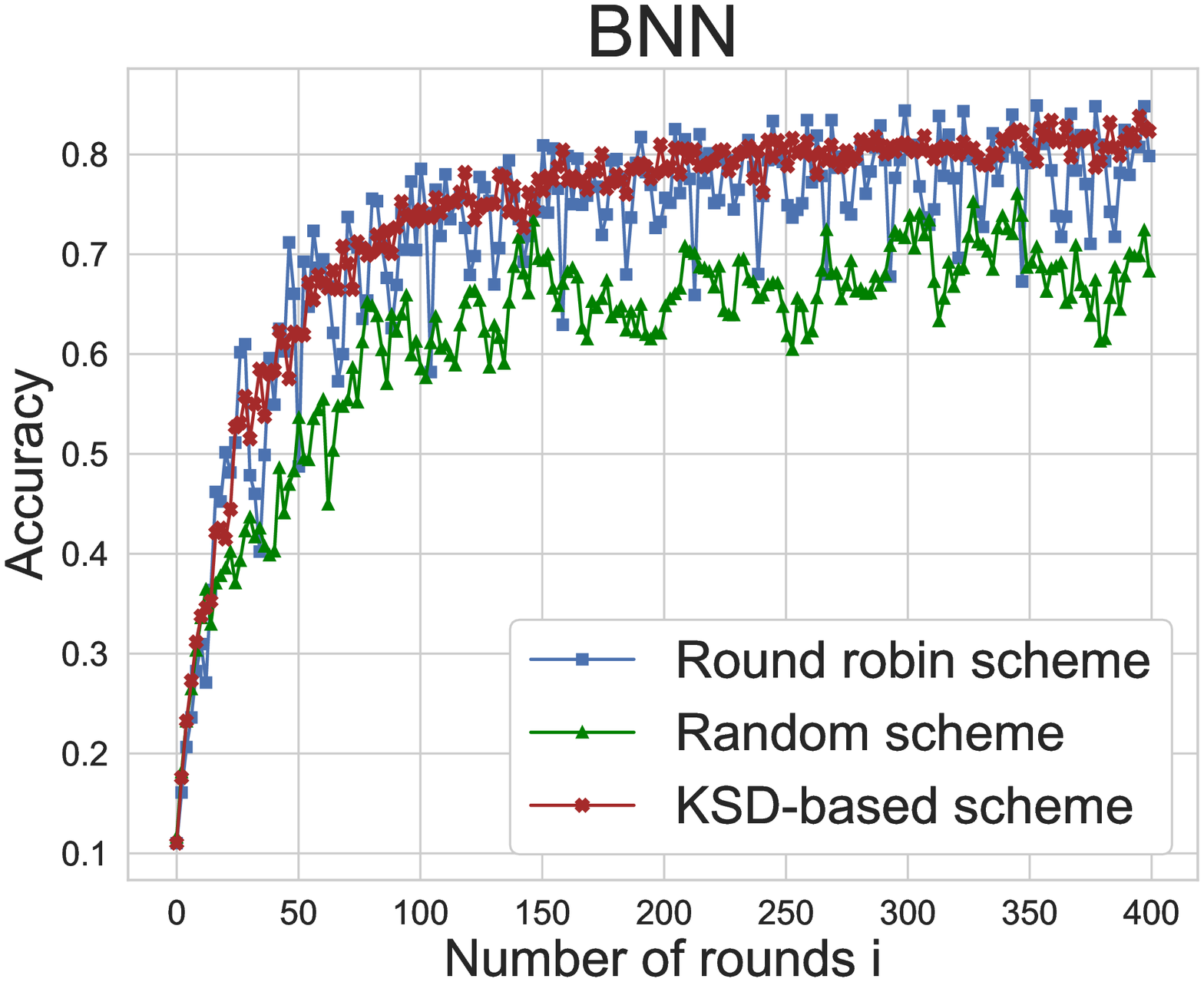} 
  } 
  \hspace{-0.8cm}
  \subfigure[120 clients]{ 
    \includegraphics[width=4.9cm]{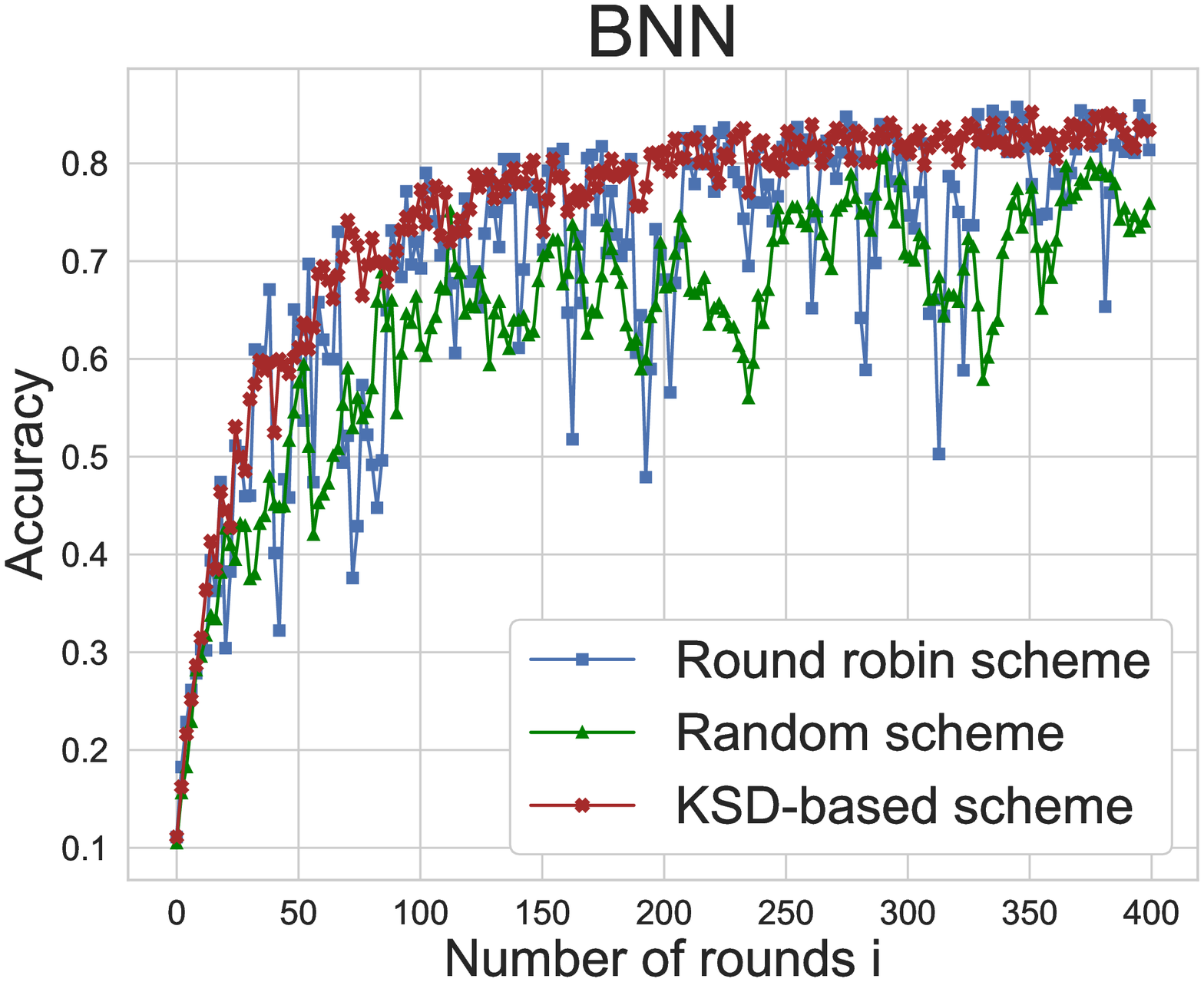} 
  } 
  \hspace{-1cm}
  \caption{Performance comparison among the KSD-based scheme and two baseline schemes.} 
\label{fig:1}
\end{figure}

Fig. \ref{fig:1}(a) and Fig. \ref{fig:1}(b) depicts the learning performance of the KSD-based scheme compared with the two baseline schemes using the BLR model. From the figure, we can observe that the KSD-based scheme achieves the same model convergence speed as the round robin scheme at the early stage of model training, and both schemes converge faster than the random selection scheme. In the later stage of model training, the KSD-based scheme outperforms the other two schemes, where the random scheme has lower model accuracy and worse model convergence, and the model accuracy curve of the round robin scheme vibrates periodically and substantially making the model have no tendency to converge. Fig. \ref{fig:1}(c) and Fig. \ref{fig:1}(d) illustrates the performance comparison among the three schemes using the BNN model. It can be observed that the model performance of the random selection scheme is the worst, and its model accuracy and convergence are inferior to the other two schemes, while the model performance of the KSD-based scheme is better than that of the round robin scheme. Specifically,  with the KSD-based scheme, the convergence curve is more stable than that of the round robin scheme throughout the model training process.  These two experimental results demonstrate the effectiveness of the KSD-based scheme, which achieves the maximization of the decrease of the local free energy.

\subsection{HIP-Based Scheme}
\begin{figure}[htb]
  \centering 
  \hspace{-1cm}
  \subfigure[30 clients]{ 
    \includegraphics[width=4.9cm]{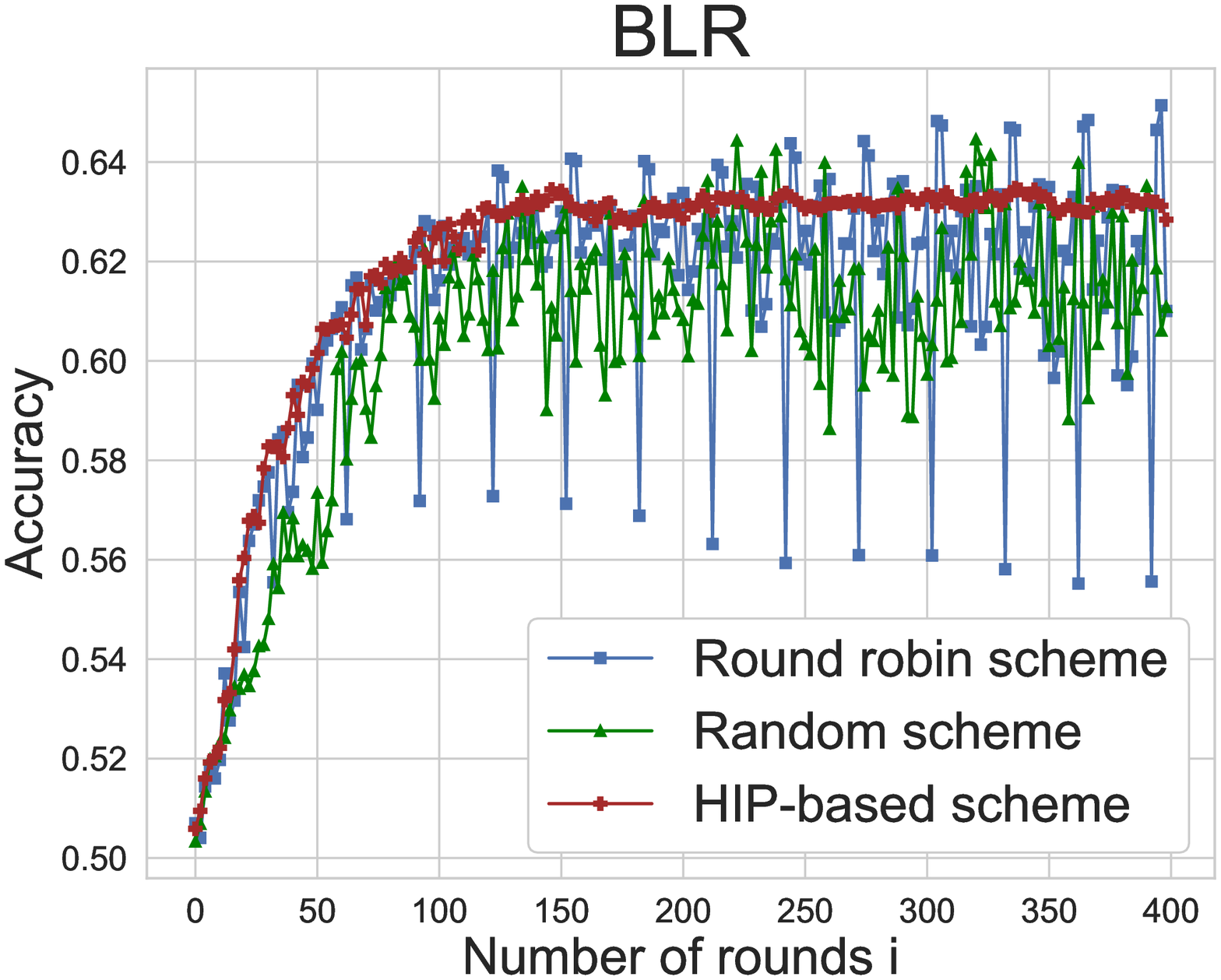} 
  } 
  \hspace{-0.8cm}
  \subfigure[120 clients]{ 
    \includegraphics[width=4.9cm]{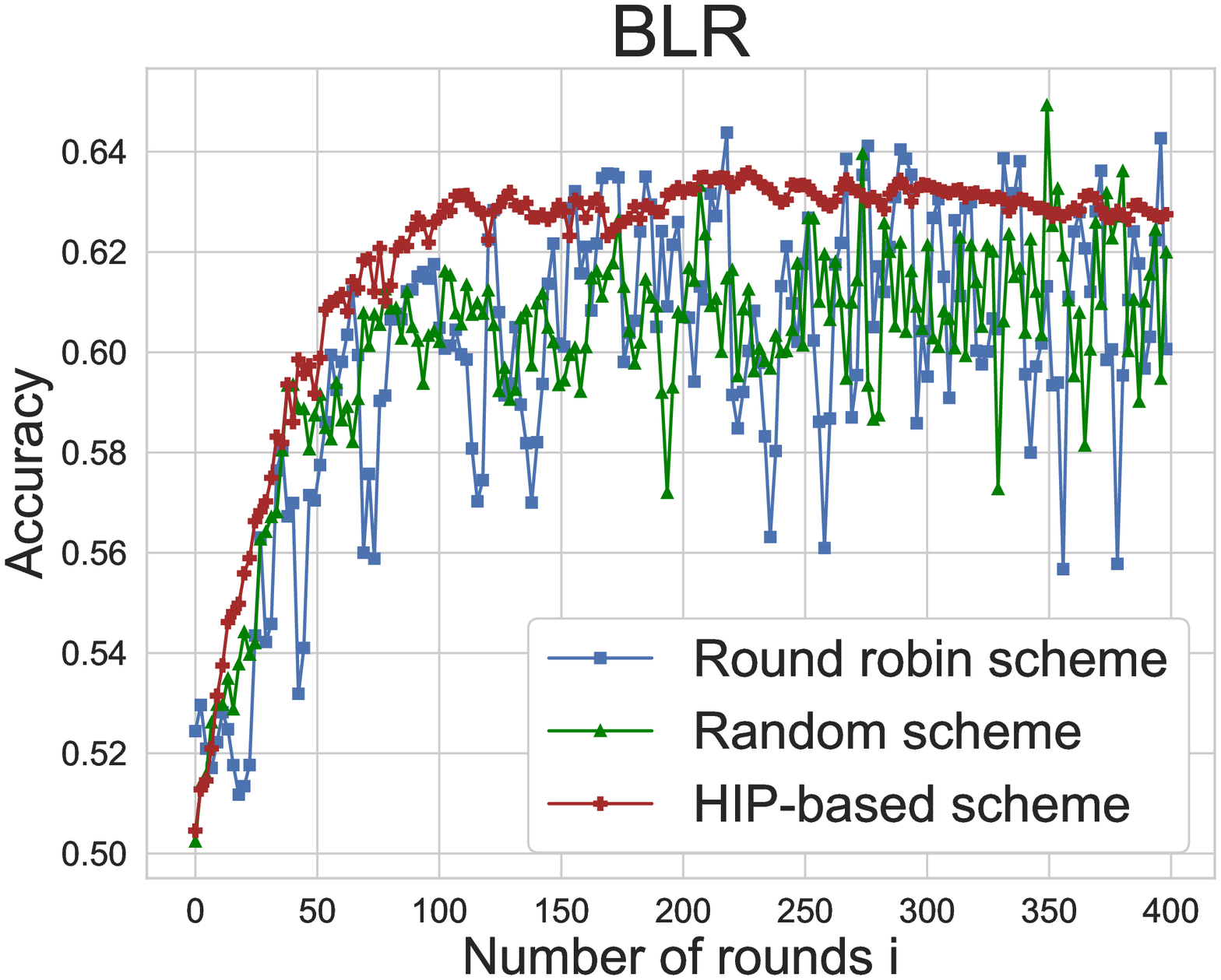} 
  } 
  \hspace{-1cm}
 
  \centering 
  \hspace{-1cm}
  \subfigure[27 clients]{ 
    \includegraphics[width=4.9cm]{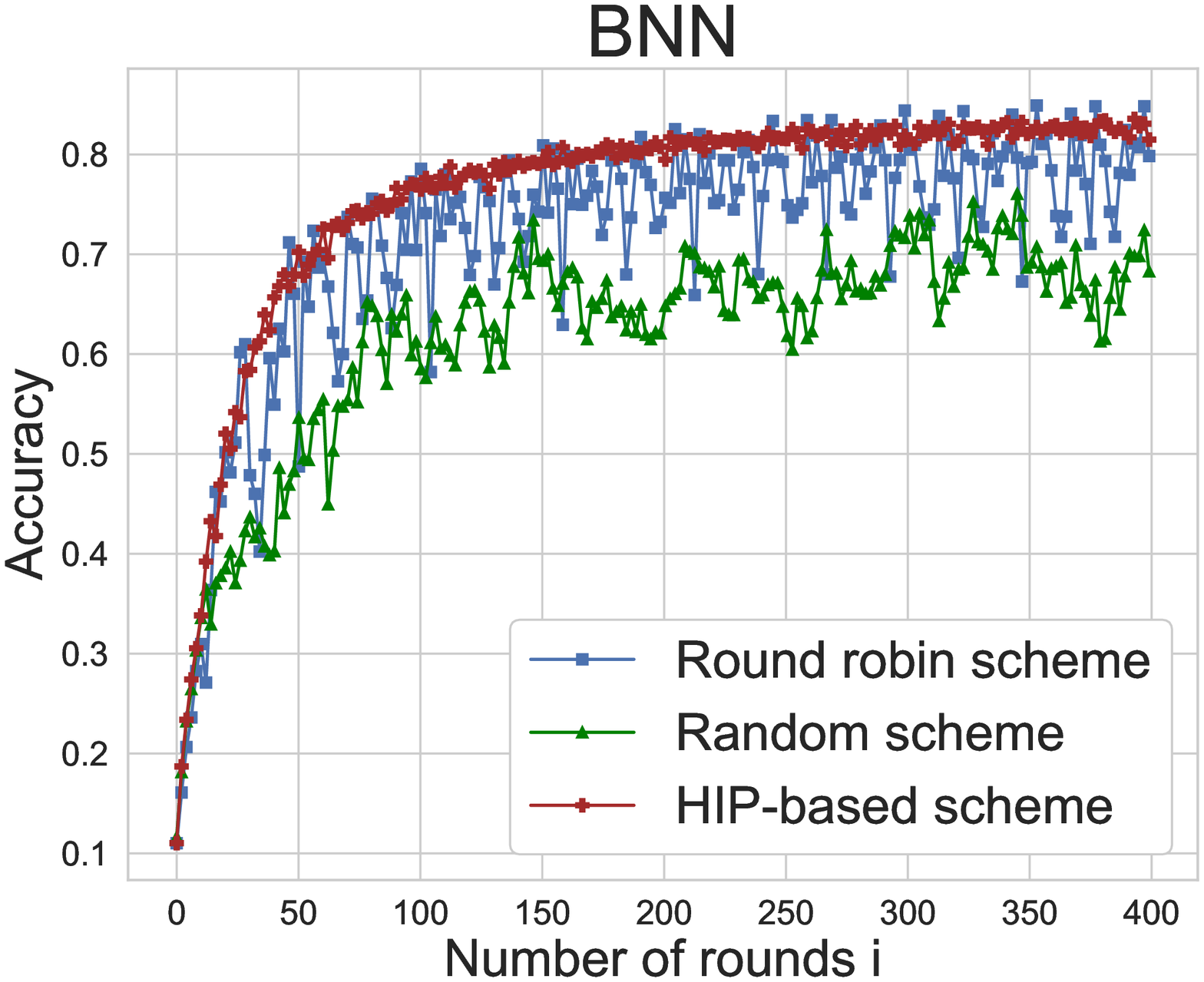} 
  } 
  \hspace{-0.8cm}
  \subfigure[120 clients]{ 
    \includegraphics[width=4.9cm]{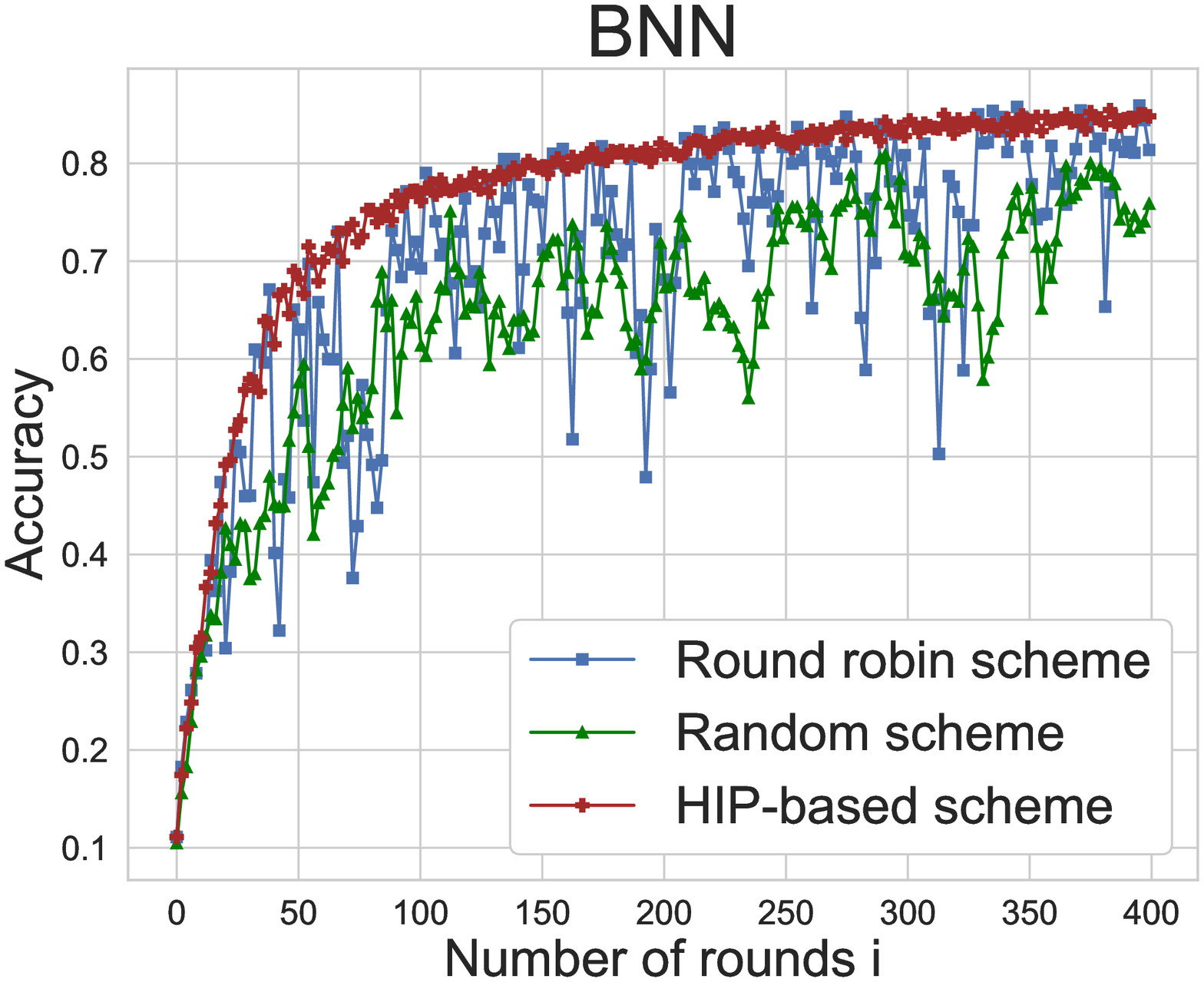} 
  } 
  \hspace{-1cm}
  \caption{Performance comparison among the HIP-based scheme and two baseline schemes.} 
\label{fig:2}
\end{figure}

From Fig. \ref{fig:2}, we observe the better performance of HIP-based scheme compared to that of two baseline schemes using the BLR model and BNN model. Specifically, with HIP-based scheme, the model accuracy curve is always stable than that of two baseline schemes and the model accuracy is almost always higher than that of two baseline schemes, which especially significant in late iterations, demonstrating that  the HIP-based scheme, which aims at minimizing the upper bound of the decrease of the global free energy, significantly improves the model performance in an environment with a pathological non-IID client data distribution, although the scheme causes more communication resources per iteration.

\subsection{Comparison of the Two Proposed Schemes}
\begin{figure}[htb]
  \centering 
  \hspace{-1cm}
  \subfigure[30 clients]{ 
    \includegraphics[width=4.9cm]{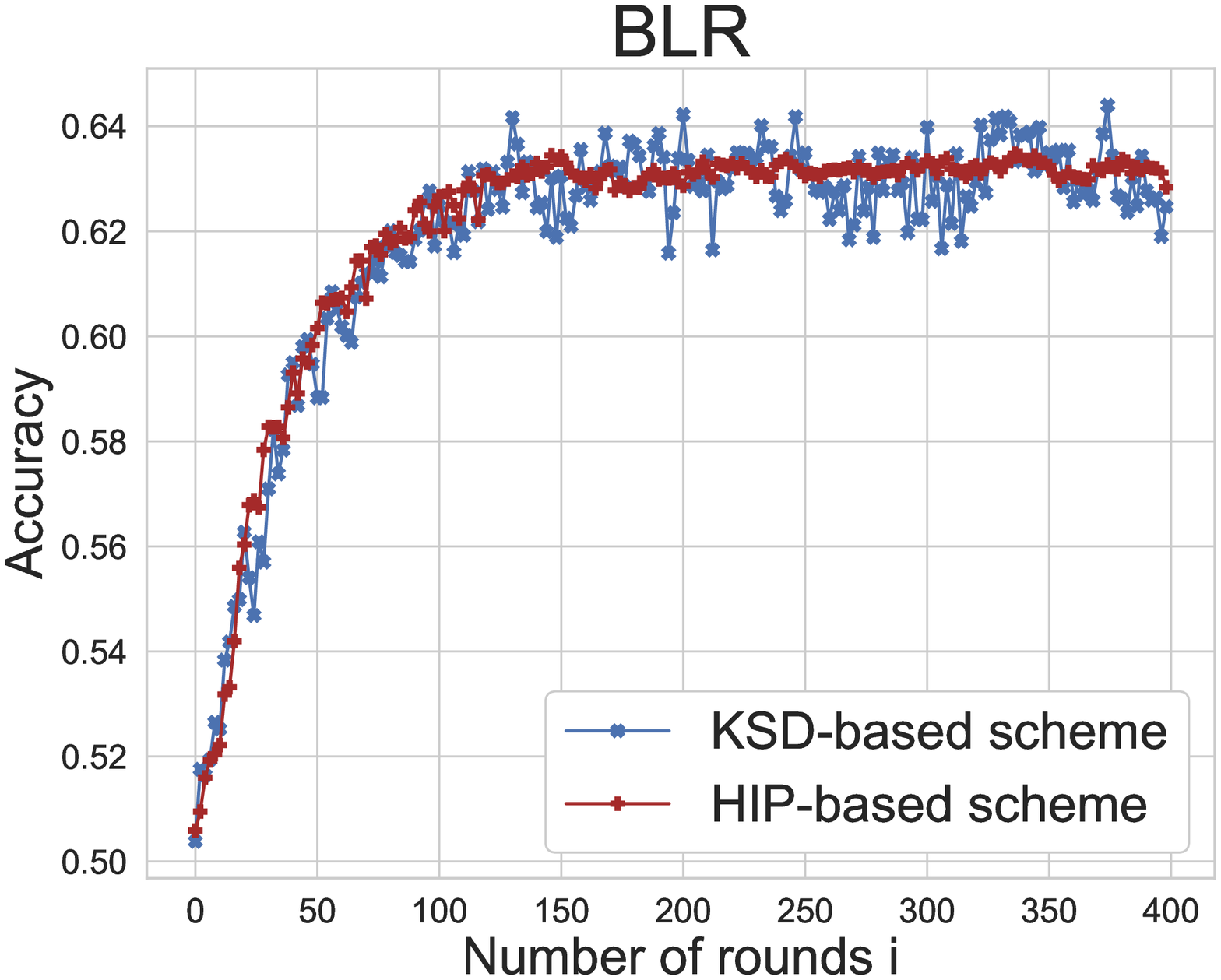} 
  } 
  \hspace{-0.8cm}
  \subfigure[120 clients]{ 
    \includegraphics[width=4.9cm]{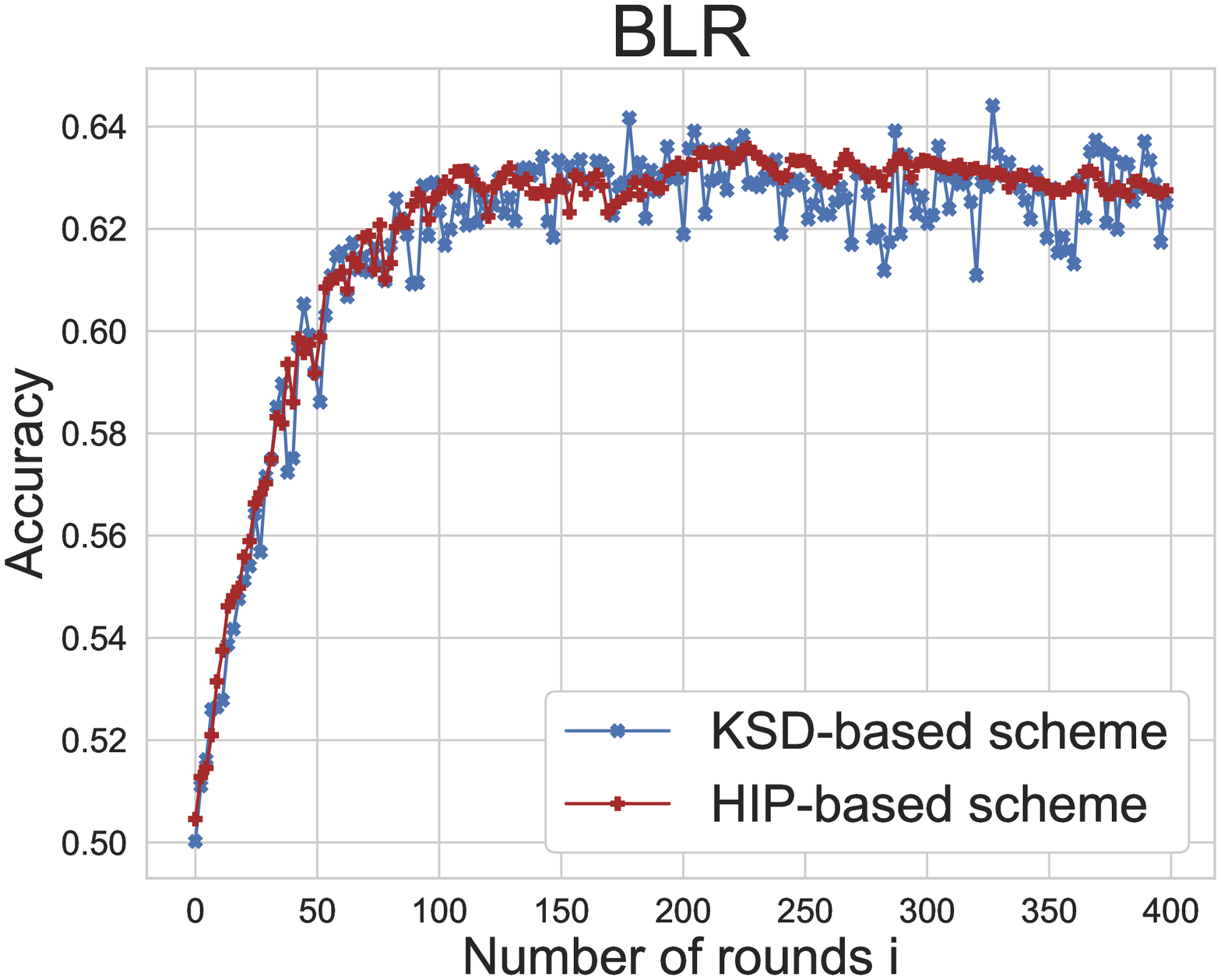} 
  } 
  \hspace{-1cm}
 
  \centering 
  \hspace{-1cm}
  \subfigure[27 clients]{ 
    \includegraphics[width=4.9cm]{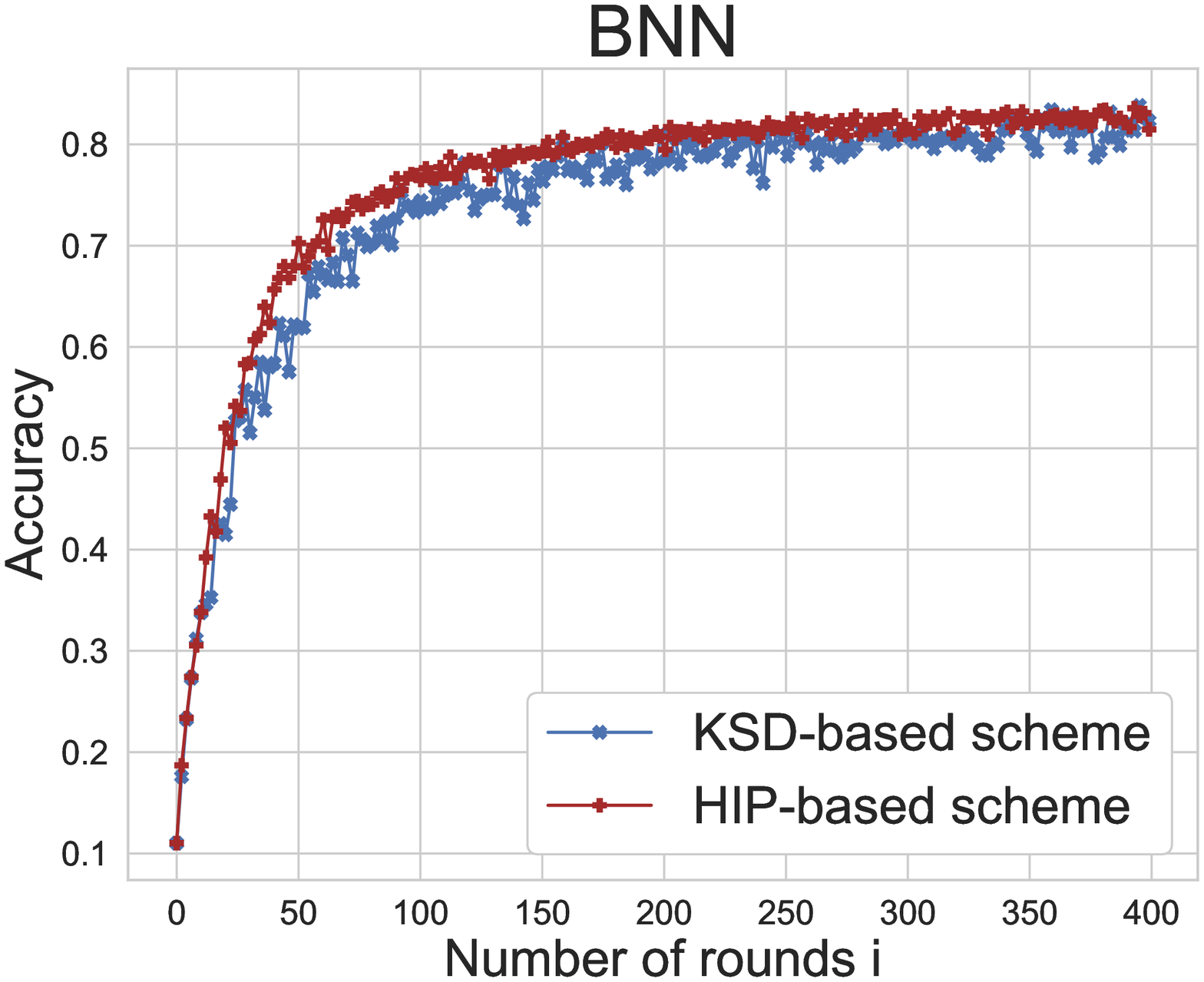} 
  } 
  \hspace{-0.8cm}
  \subfigure[120 clients]{ 
    \includegraphics[width=4.9cm]{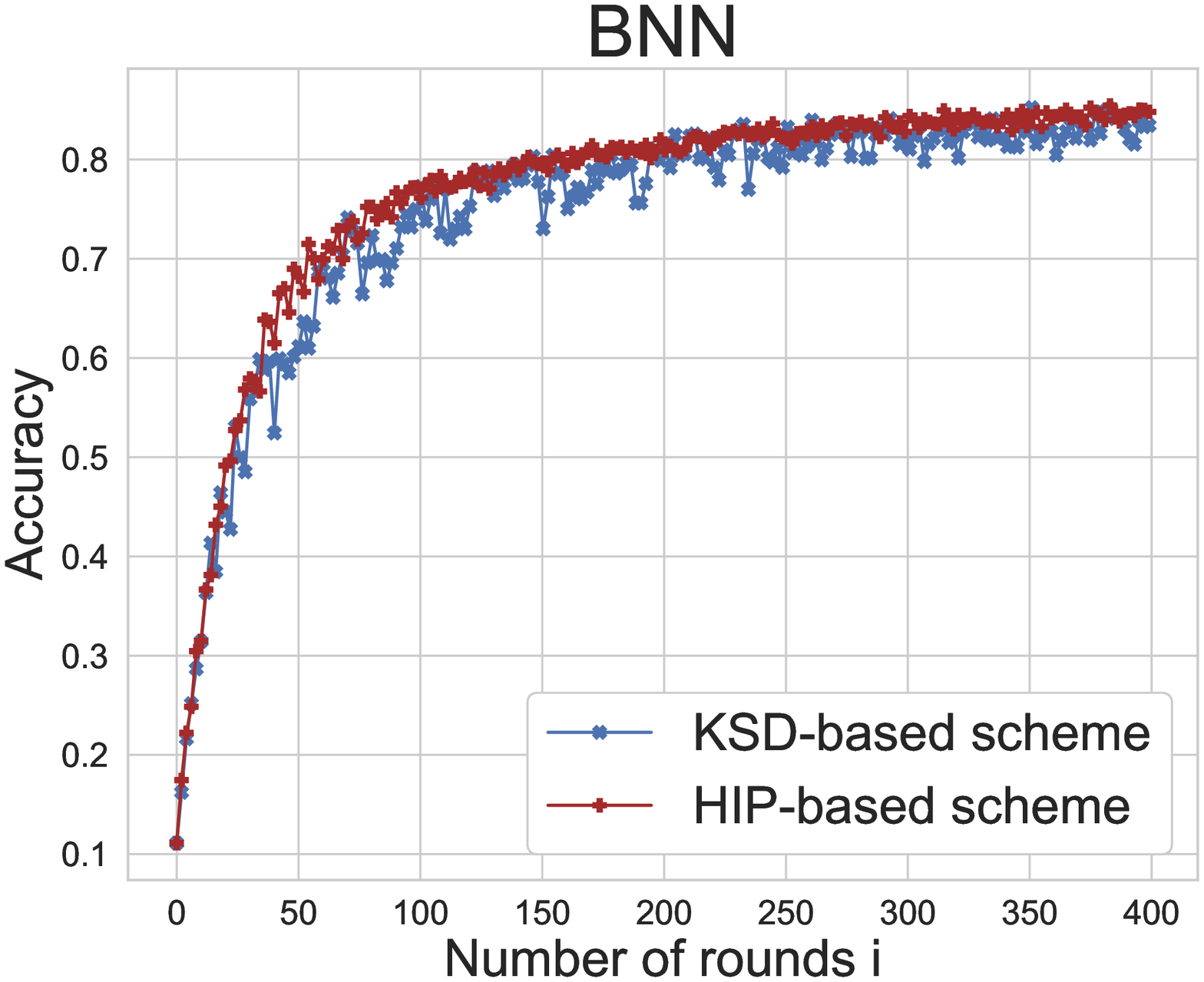} 
  } 
  \hspace{-1cm}
  \caption{Performance comparison between the KSD-based scheme and the HIP-based scheme.} 
\label{fig:3}
\end{figure}

We compare the model performance of the two proposed schemes in the BLR model and the BNN model. From Fig. \ref{fig:3}, we observed that the performance of the HIP-based scheme is better, specifically, the convergence curve of the HIP-based scheme is more stable, while the convergence speed of the HIP-based scheme is slightly faster in the BNN model. This is consistent with the previous theoretical analysis that the KSD-based scheme considered only maximizing the decrease of the local free energy, while the HIP-based scheme maximizes the decrease of the global free energy and thus should exhibit better model performance.

\subsection{Comparison with the Parallel DSVGD Algorithm}
\begin{figure}[htb]
\begin{minipage}[b]{1.0\linewidth}
  \centering
  \centerline{\includegraphics[width=10.5cm]{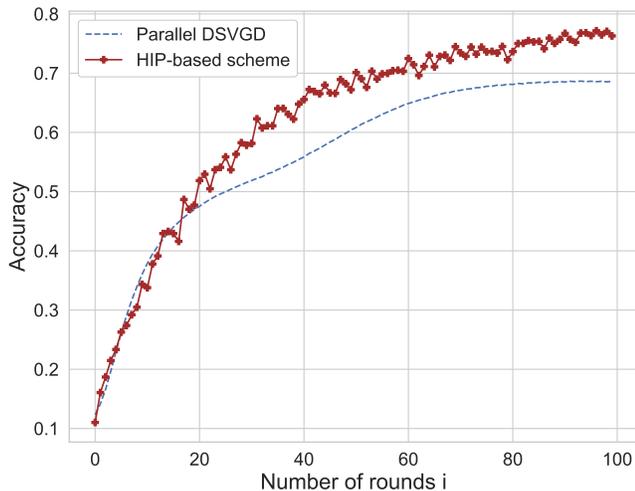}}
\end{minipage}
\caption{Performance comparison between the HIP-based and the parallel DSVGD algorithm using the BNN model.}
\label{fig:4}
\end{figure}

Fig. \ref{fig:4} depicts the performance of the HIP-based scheme compared with the parallel DSVGD algorithm using the BNN model. From this figure, we can observe that the convergence curve of the parallel DSVGD algorithm is more stable than that of the the HIP-based scheme, which is because the parallel DSVGD algorithm can use the likelihood information of all clients when updating the global particles, so that the global posterior does not favor a certain local likelihood. However, as the training proceeds, the model accuracy of the HIP-based scheme is asymptotically higher than that of the parallel DSVGD algorithm. The reason is that in the parallel DSVGD algorithm, there is an error in the KDE of the local particles uploaded at the client side, which does not accurately represent the information of the approximate scaled local likelihood, thus causing the loss of client data information in this step and degrading the accuracy of the model.
\section{Conclusion}
\label{sec:Conclusion}
In this paper, we proposed two client selection schemes for federated Bayesian learning, namely the KSD-based scheme and the HIP-based scheme. Through convergence analysis, we found that the KSD-based scheme maximizes the decrease of the local free energy per iteration and the HIP-based scheme improves the model convergence by maximizing the decrease of the global free energy per iteration. Comprehensive experiments using real datasets confirmed the performance gain of the proposed schemes compared with the baseline schemes.

\section*{Appendix}

\appendices
\subsection{Proof of Lemma \ref{lemma1}}
The decrease of the global free energy per iteration can be written as
\begin{align}\label{eqn:de_glo}
\Big(F(q^{[l+1]}&(\theta))  -F(q^{[l]}(\theta))\Big)/\alpha \nonumber\\
&=\underbrace{\Big(F_k(q^{[l+1]}(\theta))-F_k(q^{[l]}(\theta))\Big)/\alpha}_{(a)} & \nonumber\\
&+\sum_{m \neq k}\underbrace{\int \log \frac{t_m^{(i-1)}(\theta)}{p_m(\theta)} (q^{[l+1]}(\theta)-q^{[l]}(\theta))\,\mathrm{d}\theta}_{(b)}.
\end{align}
We now derive upper bounds for $(a)$ and $(b)$. $(a)$ represents the decrease of the local free energy, We first rewrite $(a)$ in the form of $\mathrm{KL}$ divergence as follows
\begin{align}
(a)= \mathrm{KL}(q^{[l+1]}\|\tilde{p}_k) - \mathrm{KL}(q^{[l]}\|\tilde{p}_k).
\end{align}
SVGD maps particles using $\theta^{[l+1]}=T(\theta^{[l]})=\theta^{[l]}+\epsilon\phi(\theta^{[l]})$, so we have that
\begin{align}\label{eqn:a_kl}
(a) &= \mathrm{KL}(T(q^{[l]})\|\tilde{p}_k) - \mathrm{KL}(q^{[l]}\|\tilde{p}_k) \nonumber\\
&= \mathrm{KL}(q^{[l]}\|T^{-1}(\tilde{p}_k)) - \mathrm{KL}(q^{[l]}\|\tilde{p}_k) \nonumber\\
&= \mathbb{E}_{q^{[l]}(\theta)}[\log q^{[l]}(\theta)- \log T^{-1}(\tilde{p}_k)(\theta)] \nonumber\\
&~~~~-\mathbb{E}_{q^{[l]}(\theta)}[\log q^{[l]}(\theta) - \log \tilde{p}_k(\theta)] \nonumber\\
&= \mathbb{E}_{q^{[l]}(\theta)}[\log \tilde{p}_k(\theta) - \log T^{-1}(\tilde{p}_k)(\theta)].
\end{align}
According to the change of variable formula for densities $T^{-1}(p)(\theta)= p(T(\theta)) \mid \det (\nabla_{\theta} T(\theta)) \mid $, we rewrite \eqref{eqn:a_kl} as
\begin{align}\label{eqn:a_kl_2}
(a) = \mathbb{E}_{q^{[l]}}[\log \tilde{p}_k(\theta) - \log \tilde{p}_k(T(\theta)) - \log \mid\det (\nabla_{\theta} T(\theta))\mid].
\end{align}
We can further simplify the term $\log \tilde{p}_k(\theta) - \log \tilde{p}_k(T(\theta))$ using the Taylor expansion as
\begin{align}\label{eqn:taylor}
\log \tilde{p}_k(\theta) - \log \tilde{p}_k(T(\theta)) \approx &-\epsilon \nabla_{\theta} \log \tilde{p}_k (\theta)^{T} \phi_k (\theta) \nonumber\\
&+ \frac{1}{2}\epsilon^2\nabla_{\theta}^2\log \tilde{p}_k (\theta)\phi_k^2(\theta).
\end{align}
Assume that the term $\nabla_{\theta} T(\theta)$ is a positive definite matrix, which can hold when $\epsilon$ takes a small value such that $1+\epsilon a_i >0$, where $a_i$ is the eigenvalues of $\nabla_{\theta} \phi(\theta)$.  We can upper bound the term $ -\log \mid\det (\nabla_{\theta} T(\theta))\mid$ as
\begin{align}\label{eqn:logdet}
-\log \mid\det (\nabla_{\theta} T(\theta))\mid &\leq -\sum_{i=1}^d(1-e_i^{-1})\nonumber\\
&= \mathrm{trace}((\nabla_{\theta} T(\theta))^{-1}-I),
\end{align}
where $e_1,...,e_d$  are the eigenvalues of $\nabla_{\theta} T(\theta)$. Note that $T(\theta)=\theta+\epsilon\phi(\theta)$, so we have $\nabla_{\theta} T(\theta)=I+\epsilon\nabla_{\theta} \phi(\theta)$, By Neumann expansion, we can obtain the approximation term of $(\nabla_{\theta} T(\theta))^{-1}$ as 
\begin{align}\label{eqn:inverse_T}
(\nabla_{\theta} T(\theta))^{-1} &= (I-(-\epsilon\nabla_{\theta}\phi(\theta)))^{-1}\nonumber\\
&\approx I - \epsilon\nabla_{\theta}\phi(\theta) + (\epsilon\nabla_{\theta}\phi(\theta))^2.
\end{align}
Then we can simplify \eqref{eqn:logdet} using \eqref{eqn:inverse_T} as
\begin{align}\label{eqn:logdet_2}
-\log \mid\det (\nabla_{\theta} T(\theta))\mid \leq \mathrm{trace}(- \epsilon\nabla_{\theta}\phi(\theta) + (\epsilon\nabla_{\theta}\phi(\theta))^2).
\end{align}
Accordingly, we can rewrite \eqref{eqn:a_kl_2} using \eqref{eqn:taylor} and \eqref{eqn:logdet_2} as
\begin{align}\label{eqn:a_kl_3}
(a) \leq &-\epsilon \mathbb{E}_{q^{[l]}}[\mathrm{trace}(\nabla_{\theta} \log \tilde{p}_k(\theta)^T \phi_k(\theta) + \nabla_{\theta} \phi_k(\theta))] \nonumber \\
&+\epsilon^2\mathbb{E}_{q^{[l]}}[\mathrm{trace}((\nabla_{\theta} \phi_k(\theta))^2+\frac{1}{2}\nabla_{\theta}^2\log \tilde{p}_k (\theta)\phi_k^2(\theta))] \nonumber \\
    \leq &-\epsilon S(q^{[l]}, \tilde{p}_k) + \epsilon^2\mathbb{E}_{q^{[l]}}[\mathrm{trace}((\nabla_{\theta} \phi_k(\theta))^2 \nonumber\\
    &+\frac{1}{2}\nabla_{\theta}^2\log \tilde{p}_k (\theta)\phi_k^2(\theta))] .
\end{align}
Here we assume a small learning rate $\epsilon$ such that the second term in \eqref{eqn:a_kl_3} is approximately equal to 0, therefore we obtain the upper bound of $(a)$ as
\begin{align}
(a)\leq -\epsilon S(q^{[l]}, \tilde{p}_k) .
\end{align}
As for the upper bound of $(b)$, we refer to the derivation of \cite{kassab2022federated} as
\begin{align}\label{eqn:b_kl}
(b) \leq 2(K-1)l_{max}^{(i)}\sqrt{2\mathrm{KL}(q^{[l+1]}\|q^{[l]}) },
\end{align}
where $l_{max}^{(i)}=\sup_{\theta} \max_{m \neq k} \mid \log (t_m^{(i-1)}(\theta))\cdot p_m(\theta) \mid $. In summary, the decrease of the global free energy  can be upper bounded as 
\begin{align}
\Big(F(q^{[l+1]}(\theta))&-F(q^{[l]}(\theta))\Big)/\alpha \leq \nonumber\\
-\epsilon S(q^{[l]},& \tilde{p}_k) +2(K-1)l_{max}^{(i)} \sqrt{2\mathrm{KL}(q^{[l+1]}\rVert q^{[l]})}.
\end{align}

\subsection{Proof of Theorem \ref{theorem1}}
Given the variational distribution $q(\theta)$, the target distribution $p_1(\theta)$ and the target distribution $p_2(\theta)$, we can obtain the score functions of $p_1(\theta)$ as $S_{p1} = \nabla_{\theta}\log p_1(\theta)$ and the score functions of $p_2(\theta)$ as $S_{p2} = \nabla_{\theta}\log p_2(\theta)$. Following the SVGD paper, we can obtain the SVGD update functions $\phi_1 = \mathbb{E}_{\theta \sim q}[S_{p1}(\theta)\mathrm{k}(\theta,\cdot)+\nabla_{\theta}\mathrm{k}(\theta,\cdot)]$ and $\phi_2 = \mathbb{E}_{\theta \sim q}[S_{p2}(\theta)\mathrm{k}(\theta,\cdot)+\nabla_{\theta}\mathrm{k}(\theta,\cdot)]$. In the Hilbert space defined by kernel $\mathrm{k}(\theta,\theta')$, we have the HIP between the two update functions as

\begin{align}\label{eqn:hip_1}
\langle \phi_1, \phi_2 \rangle_{\mathcal{H}^d} = \sum_{l=1}^d\langle &\phi_1, \phi_2 \rangle_{\mathcal{H}} \nonumber \\
=\sum_{l=1}^d \langle &\mathbb{E}_{\theta \sim q}[S_{p1}^l(\theta)\mathrm{k}(\theta,\cdot)+\nabla_{\theta}\mathrm{k}(\theta,\cdot)], \nonumber\\
&\mathbb{E}_{\theta' \sim q}[S_{p2}^l(\theta')\mathrm{k}(\theta',\cdot)+\nabla_{\theta'}\mathrm{k}(\theta',\cdot)] \rangle_{\mathcal{H}} \nonumber \\
\end{align}
According to the Stein's Identity, we can obtain that
\begin{align}
\mathbb{E}_{\theta \sim q}[S_{q}(\theta)\mathrm{k}(\theta,\cdot)^T + \nabla_{\theta}\mathrm{k}(\theta,\cdot)] = 0,
\end{align}
therefore, we can rewrite the term $\mathbb{E}_{\theta \sim q}[S_{p}(\theta)\mathrm{k}(\theta,\cdot)^T + \nabla_{\theta}\mathrm{k}(\theta,\cdot)] $ as 
\begin{align}\label{eqn:stein_identity}
\mathbb{E}_{\theta \sim q}[S_{p}(\theta)\mathrm{k}(\theta,\cdot)^T &+ \nabla_{\theta}\mathrm{k}(\theta,\cdot)]  \nonumber\\
=& \mathbb{E}_{\theta \sim q}[S_{p}(\theta)\mathrm{k}(\theta,\cdot)^T + \nabla_{\theta}\mathrm{k}(\theta,\cdot) \nonumber\\
&- S_{q}(\theta)\mathrm{k}(\theta,\cdot)^T - \nabla_{\theta}\mathrm{k}(\theta,\cdot) ]  \nonumber \\
=&\mathbb{E}_{\theta \sim q}[(S_p(\theta)-S_q(\theta))\mathrm{k}(\theta,\cdot)^T].
\end{align}
Accordingly, we can further simplify \eqref{eqn:hip_1} using \eqref{eqn:stein_identity} as
\begin{align}\label{eqn:hip_2}
\langle \phi_1 &, \phi_2 \rangle_{\mathcal{H}^d} \nonumber\\
&=\sum_{l=1}^d \langle \mathbb{E}_{\theta \sim q}[(S_{p1}^l(\theta)-S_q^l(\theta))\mathrm{k}(\theta,\cdot)],\nonumber\\
&~~~~~~~~~~~~~~~\mathbb{E}_{\theta' \sim q}[ (S_{p2}^l(\theta')-S_q^l(\theta'))\mathrm{k}(\theta',\cdot)  ]\rangle_{\mathcal{H}} \nonumber \\
&=\mathbb{E}_{\theta,\theta' \sim q}[(S_{p1}(\theta)-S_q(\theta))^T \langle \mathrm{k}(\theta,\cdot) ,\mathrm{k}(\theta',\cdot) \rangle_{\mathcal{H}}\nonumber\\
&~~~~~~~~~~~~~~~~~~~~~~~~~~~~~~~~~~~(S_{p2}(\theta')-S_q(\theta'))  ]\nonumber \\
&=\mathbb{E}_{\theta,\theta' \sim q}[(S_{p1}(\theta)-S_q(\theta))^T\mathrm{k}(\theta,\theta')(S_{p2}(\theta')-S_q(\theta'))].
\end{align}
Here we define $v(\theta,\theta')=\mathrm{k}(\theta,\theta')S_{p2}(\theta')+\nabla_{\theta'}\mathrm{k}(\theta,\theta')$. For the first term $\mathrm{k}(\theta,\theta')S_{p2}(\theta')$, we have that
\begin{align}
\int_{\theta} \nabla_{\theta}(q(\theta)\mathrm{k}(\theta,\theta')&S_{p2}(\theta')) d\theta \nonumber\\
&= S_{p2}(\theta')\int_{\theta} \nabla_{\theta}(q(\theta)\mathrm{k}(\theta,\theta')) d\theta.
\end{align}
Since $\mathrm{k}(\theta,\theta')$ is in the Stein class of q, we have 
\begin{align}
\int_{\theta} \nabla_{\theta}(q(\theta)\mathrm{k}(\theta,\theta')S_{p2}(\theta')) d\theta =  S_{p2}(\theta') \cdot 0 = 0. 
\end{align}
So the first term is in the Stein class of q. The second $\nabla_{\theta'}\mathrm{k}(\theta,\theta')$ is also in the Stein class of q as follows
\begin{align}
\int_{\theta} \nabla_{\theta}(q(\theta)&\nabla_{\theta'}\mathrm{k}(\theta,\theta')) d\theta \nonumber\\
&= \int_{\theta}\nabla_{\theta}q(\theta)\nabla_{\theta'}\mathrm{k}(\theta,\theta')+q(\theta)\nabla_{\theta}\nabla_{\theta'}\mathrm{k}(\theta,\theta')d\theta \nonumber \\
&=\nabla_{\theta'} \int_{\theta}\nabla_{\theta}q(\theta)\mathrm{k}(\theta,\theta')+q(\theta)\nabla_{\theta}\mathrm{k}(\theta,\theta')d\theta \nonumber \\
&=\nabla_{\theta'} \int_{\theta} \nabla_{\theta}(q(\theta)\mathrm{k}(\theta,\theta'))d\theta \nonumber \\
&=0.
\end{align}
So $v(\theta,\theta')$ is still in the Stein class of q and we have $v(\theta,\theta')=\mathrm{k}(\theta,\theta')S_{p2}(\theta')+\nabla_{\theta'}\mathrm{k}(\theta,\theta') = \mathrm{k}(\theta,\theta')(S_{p2}(\theta')-S_{q}(\theta'))$ and further simplify \eqref{eqn:hip_2} as
\begin{align}\label{eqn:hip_3}
&=\mathbb{E}_{\theta,\theta' \sim q}[(S_{p1}(\theta)-S_q(\theta))^T v(\theta,\theta')] \nonumber \\
&=\mathbb{E}_{\theta,\theta' \sim q}[S_{p_1}(\theta)^Tv(\theta,\theta') + \nabla_{\theta}v(\theta,\theta')] \nonumber \\
&=\mathbb{E}_{\theta,\theta' \sim q}[S_{p_1}(\theta)^T(\mathrm{k}(\theta,\theta')S_{p2}(\theta')+\nabla_{\theta'}\mathrm{k}(\theta,\theta'))\nonumber\\
&~~~~~~~~~+\nabla_{\theta}(\mathrm{k}(\theta,\theta')S_{p2}(\theta')+\nabla_{\theta'}\mathrm{k}(\theta,\theta'))] \nonumber \\
&=\mathbb{E}_{\theta,\theta' \sim q}[S_{p_1}(\theta)^T\mathrm{k}(\theta,\theta')S_{p2}(\theta')+ S_{p_1}(\theta)^T\nabla_{\theta'}\mathrm{k}(\theta,\theta')\nonumber \\
&~~~~~~~~~+ \nabla_{\theta}\mathrm{k}(\theta,\theta')^T S_{p2}(\theta') + \mathrm{trace}(\nabla_{\theta,\theta'}\mathrm{k}(\theta,\theta'))].
\end{align}
Here we define
\begin{align}\label{eqn:h_p1p2}
h_{p_1,p_2}(\theta,\theta')=&S_{p_1}(\theta)^T\mathrm{k}(\theta,\theta')S_{p2}(\theta')+ S_{p_1}(\theta)^T\nabla_{\theta'}\mathrm{k}(\theta,\theta')\nonumber \\
&+ \nabla_{\theta}\mathrm{k}(\theta,\theta')^T S_{p2}(\theta') + \mathrm{trace}(\nabla_{\theta,\theta'}\mathrm{k}(\theta,\theta')).
\end{align}
Accordingly we can rewrite \eqref{eqn:hip_3} using \eqref{eqn:h_p1p2} as
\begin{align}\label{eqn:hip_5}
\langle \phi_1, \phi_2 \rangle_{\mathcal{H}^d} = \mathbb{E}_{\theta,\theta' \sim q}[h_{p_1,p_2}(\theta,\theta')].
\end{align}
Given i.i.d. sample $\{\theta\}_{i=1}^N$ drawn from $q(\theta)$, we can estimate $\langle \phi_1, \phi_2 \rangle_{\mathcal{H}^d}$ using a V-statistic of form $\langle \phi_1, \phi_2 \rangle_{\mathcal{H}^d} = \frac{1}{N^2}\sum_{i,j=1}^N[h_{p_1,p_2}(\theta_i,\theta_j)]$.

\subsection{Proof of Lemma \ref{lemma2}}
According to Lemma 1, the decrease of the global free energy is rewritten as \eqref{eqn:de_glo} and divided into $(a)$ and $(b)$. Here we re-derive the upper bound for $(a)$ as
\begin{align}\label{eqn:a_le2_1}
(a) =& \left(F_k(q^{[l+1]}(\theta))-F_k(q^{[l]}(\theta))\right)/\alpha \nonumber \\
    =& \mathbb{E}_{q^{[l+1]}}\left[\log \frac{q^{[l+1]}(\theta)}{p_k(\theta)\cdot(p_0(\theta)\prod_{m\neq k}t_m^{(i-1)}(\theta))} \right] \nonumber \\
    &- \mathbb{E}_{q^{[l]}}\left[\log \frac{q^{[l]}(\theta)}{p_k(\theta)\cdot(p_0(\theta)\prod_{m\neq k}t_m^{(i-1)}(\theta))} \right]\nonumber \\
    =& \mathbb{E}_{q^{[l+1]}}\left[\log \frac{q^{[l+1]}(\theta)}{p_k(\theta)}\right] - \mathbb{E}_{q^{[l]}}\left[\log \frac{q^{[l]}(\theta)}{p_k(\theta)}\right]  \nonumber \\ &+\mathbb{E}_{q^{[l+1]}}\left[\log \frac{1}{p_0(\theta)\prod_{m\neq k}t_m^{(i-1)}(\theta)} \right] \nonumber \\
    &- \mathbb{E}_{q^{[l]}}\left[\log \frac{1}{p_0(\theta)\prod_{m\neq k}t_m^{(i-1)}(\theta)} \right]\nonumber \\
    =& \mathrm{KL}(q^{[l+1]}\|p_k) - \mathrm{KL}(q^{[l]}\|p_k)  \nonumber \\
    &+\mathbb{E}_{q^{[l+1]}}\left[\log \frac{1}{p_0(\theta)\prod_{m\neq k}t_m^{(i-1)}(\theta)} \right] \nonumber \\
    &- \mathbb{E}_{q^{[l]}}\left[\log \frac{1}{p_0(\theta)\prod_{m\neq k}t_m^{(i-1)}(\theta)} \right].
\end{align}
On the rightmost side of \eqref{eqn:a_le2_1}, we replace the first term with (c) and the remaining terms with (d). We first derive the upper bound for $(c)$ using the derivation of Lemma 1 and the equation $S(q^{[l]}, p_k) = \langle \phi_k, \phi_k \rangle_{\mathcal{H}^d}$ as
\begin{align}\label{eqn:c_up}
(c) &\leq -\epsilon S(q^{[l]}, p_k) \nonumber \\
&= -\epsilon \langle \phi_k, \phi_k \rangle_{\mathcal{H}^d} .
\end{align}
We introduce a normalization constant $C=\prod_{m=1}^K C_m$ such that $\int\frac{t_m(\theta)}{C_k} \mathrm{d}\theta = 1$ for $m=1,\ldots, K$. Then we derive the upper bound for $(d)$ as
\begin{align}\label{eqn:d_up}
(d)=& \mathbb{E}_{q^{[l+1]}}\left[\log \frac{t_k^{(i)}(\theta)}{p_0(\theta)t_k^{(i)}(\theta)\prod_{m\neq k}t_m^{(i-1)}(\theta)} \right] \nonumber\\
&- \mathbb{E}_{q^{[l]}}\left[\log \frac{t_k^{(i-1)}(\theta)}{p_0(\theta)t_k^{(i-1)}(\theta)\prod_{m\neq k}t_m^{(i-1)}(\theta)} \right]\nonumber \\
=& \mathbb{E}_{q^{[l+1]}}\left[\log \frac{t_k^{(i)}}{q^{[l+1]}}\right] - \mathbb{E}_{q^{[l]}}\left[\log \frac{t_k^{(i-1)}}{q^{[l]}} \right]\nonumber \\
\leq& \mathrm{KL}(q^{[l]}\|t_k^{(i-1)}) + \log \left(C_k \int \frac{t_k^{(i)}(\theta)}{C_k} \mathrm{d}\theta\right) \nonumber \\
\leq&\mathrm{KL}(q^{[l]}\|t_k^{(i-1)}) + \log C_k.
\end{align}
Therefore, $(a)$ can be upper bounded using \eqref{eqn:c_up} and \eqref{eqn:d_up} as
\begin{align}\label{eqn:a_le2_2}
(a)\leq-\epsilon \langle \phi_k, \phi_k \rangle_{\mathcal{H}^d} +  \mathrm{KL}(q^{[l]}\|t_k^{(i-1)}) + \log C_k.
\end{align}
We now re-derive the upper bound for $(b)$ as
\begin{align}\label{eqn:b_le2_1}
(b) =& \int \log \frac{t_m^{i-1}(\theta)}{p_m(\theta)} (q^{[l+1]}(\theta)-q^{[l]}(\theta))\,\mathrm{d}\theta \nonumber \\
=& \mathbb{E}_{q^{[l+1]}(\theta)}\log \frac{q^{[l+1]}(\theta)}{p_m(\theta)} + \mathbb{E}_{q^{[l+1]}(\theta)}\log \frac{t_m^{(i-1)}(\theta)}{q^{[l+1]}(\theta)}\nonumber\\
&- \mathbb{E}_{q^{[l]}(\theta)}\log \frac{q^{[l]}(\theta)}{p_m(\theta)} - \mathbb{E}_{q^{[l]}(\theta)}\log \frac{t_m^{(i-1)}(\theta)}{q^{[l]}(\theta)}\nonumber \\
\leq& \underbrace{\mathrm{KL}(q^{[l+1]}\|p_m) - \mathrm{KL}(q^{[l]}\|p_m)}_{(e)} \nonumber\\
&+ \mathrm{KL}(q^{[l]}\|t_m^{(i-1)}) + \log C_m.
\end{align}
Here we simplify $(e)$ using the derivation of Lemma 1 as
\begin{align}\label{eqn:b_le2_2}
(e) \leq -\epsilon\mathbb{E}_{q^{[l]}}[\mathrm{trace}(\nabla_{\theta} \log p_m(\theta)^T \phi_k(\theta) + \nabla_{\theta}\phi_k(\theta))] 
\end{align}
Note that $\phi_k(\theta)=\mathbb{E}_{\theta \sim q}[S_{p_k}(\theta)\mathrm{k}(\theta,\cdot)+\nabla_{\theta}\mathrm{k}(\theta,\cdot)]$ is not the SVGD update function of distribution $p_m(\theta)$ . According to the properties of the RKHS, we have
\begin{align}
\mathbb{E}_{q^{[l]}}[\mathrm{trace}(&\nabla_{\theta} \log p_m(\theta)^T \phi_k(\theta)  + \nabla_{\theta}\phi_k(\theta))] \nonumber \\
&= \sum_{l=1}^d  \mathbb{E}_{q^{[l]}}[S_{p_m}^l(\theta)\phi_k^l(\theta)+ \nabla_{\theta}\phi_k^l(\theta)]   \nonumber \\
&= \sum_{l=1}^d  \mathbb{E}_{q^{[l]}}[S_{p_m}^l(\theta)\langle \phi_k^l(\cdot), \mathrm{k}(\theta, \cdot) \rangle_{\mathcal{H}} \nonumber\\
&~~~~~~~~~~~~~~~~~~~~~~~~~~~+ \langle \phi_k^l(\cdot), \nabla_{\theta^l}\mathrm{k}(\theta, \cdot) \rangle_{\mathcal{H}}]\nonumber \\
&= \sum_{l=1}^d  \langle \phi_k^l, \mathbb{E}_{q^{[l]}}[S_{p_m}^l(\theta)\mathrm{k}(\theta,\cdot)+\nabla_{\theta^l}\mathrm{k}(\theta,\cdot)]\rangle_{\mathcal{H}}\nonumber \\
&= \sum_{l=1}^d \langle \phi_k^l, \phi_m^l \rangle_{\mathcal{H}^d}\nonumber \\
&= \langle \phi_k, \phi_m \rangle_{\mathcal{H}^d},
\end{align}
which allows us to rewrite \eqref{eqn:b_le2_2} as
\begin{align}\label{eqn:hip_k_m}
-\epsilon \langle \phi_k, \phi_m \rangle_{\mathcal{H}^d} .
\end{align}
Accordingly, $(b)$ can be upper bounded using \eqref{eqn:b_le2_1} and \eqref{eqn:hip_k_m} as
\begin{align}\label{eqn:b_le2_3}
(b) \leq -\epsilon \langle \phi_k, \phi_m \rangle_{\mathcal{H}^d} + \mathrm{KL}(q^{[l]}\|t_m^{(i-1)})+\log C_m.
\end{align}
In summary, we can obtain the upper bound of the decrease of the global free energy using \eqref{eqn:a_le2_2} and \eqref{eqn:b_le2_3} as follows
\begin{align}
\left(F(q^{[l+1]}(\theta))-F(q^{[l]}(\theta))\right)/\alpha\leq &-\epsilon \sum_{m=1}^K \left \langle \phi_k,\phi_m \right \rangle_{\mathcal{H}^d} \nonumber\\
+ \sum_{m=1}^K \mathrm{KL}&(q^{[l]}\rVert t_m^{(i-1)}) + \log C.
\end{align}

\subsection{Proof of Lemma \ref{lemma3}}
Given the SVGD update function $\phi'$ for the distribution $p'=\left(\prod_{m=1}^K p_m \right)^{\frac{1}{K}}$, we have that
\begin{align}\label{eqn:sum_k_m}
\sum_{m=1}^K &\left \langle \phi_k,\phi_m \right \rangle_{\mathcal{H}^d} \nonumber\\
 =& \sum_{m=1}^K \mathbb{E}_{\theta,\theta' \sim q}[S_{p_k}(\theta)^T \mathrm{k}(\theta,\theta')S_{p_m}(\theta') \nonumber\\
 &+ S_{p_k}(\theta)^T \nabla_{\theta'}\mathrm{k}(\theta,\theta') 
 + \nabla_{\theta}\mathrm{k}(\theta,\theta')^T S_{p_m}(\theta') \nonumber\\
 &+ \mathrm{trace}(\nabla_{\theta,\theta'}\mathrm{k}(\theta,\theta'))] \nonumber \\
 =& \mathbb{E}_{\theta,\theta' \sim q}[S_{p_k}(\theta)^T \mathrm{k}(\theta,\theta')\sum_{m=1}^KS_{p_m}(\theta') \nonumber\\
 &+ K\cdot S_{p_k}(\theta)^T \nabla_{\theta'}\mathrm{k}(\theta,\theta') + \nabla_{\theta}\mathrm{k}(\theta,\theta')^T \sum_{m=1}^KS_{p_m}(\theta') \nonumber\\
&+ K\cdot\mathrm{trace}(\nabla_{\theta,\theta'}\mathrm{k}(\theta,\theta'))]
 \nonumber \\
 =& K\cdot\mathbb{E}_{\theta,\theta' \sim q}[S_{p_k}(\theta)^T \mathrm{k}(\theta,\theta')\left(\frac{1}{K}\sum_{m=1}^KS_{p_m}(\theta')\right) \nonumber\\
 +& S_{p_k}(\theta)^T \nabla_{\theta'}\mathrm{k}(\theta,\theta') + \nabla_{\theta}\mathrm{k}(\theta,\theta')^T \left(\frac{1}{K}\sum_{m=1}^KS_{p_m}(\theta')\right) \nonumber\\
 &+ \mathrm{trace}(\nabla_{\theta,\theta'}\mathrm{k}(\theta,\theta'))]
 \nonumber \\
=& K\cdot\mathbb{E}_{\theta,\theta' \sim q}[S_{p_k}(\theta)^T \mathrm{k}(\theta,\theta')S_{p'}(\theta') \nonumber\\
 &+ S_{p_k}(\theta)^T \nabla_{\theta'}\mathrm{k}(\theta,\theta') + \nabla_{\theta}\mathrm{k}(\theta,\theta')^T S_{p'}(\theta') \nonumber\\
&+ \mathrm{trace}(\nabla_{\theta,\theta'}\mathrm{k}(\theta,\theta'))]
 \nonumber \\
 =& K \cdot \left \langle \phi_k,\phi' \right \rangle_{\mathcal{H}^d}.
\end{align}

\bibliographystyle{IEEEtran}
\bibliography{IEEEabrv,link}

\begin{thebibliography}{10}
\providecommand{\url}[1]{#1}
\csname url@samestyle\endcsname
\providecommand{\newblock}{\relax}
\providecommand{\bibinfo}[2]{#2}
\providecommand{\BIBentrySTDinterwordspacing}{\spaceskip=0pt\relax}
\providecommand{\BIBentryALTinterwordstretchfactor}{4}
\providecommand{\BIBentryALTinterwordspacing}{\spaceskip=\fontdimen2\font plus
\BIBentryALTinterwordstretchfactor\fontdimen3\font minus
  \fontdimen4\font\relax}
\providecommand{\BIBforeignlanguage}[2]{{%
\expandafter\ifx\csname l@#1\endcsname\relax
\typeout{** WARNING: IEEEtran.bst: No hyphenation pattern has been}%
\typeout{** loaded for the language `#1'. Using the pattern for}%
\typeout{** the default language instead.}%
\else
\language=\csname l@#1\endcsname
\fi
#2}}
\providecommand{\BIBdecl}{\relax}
\BIBdecl

\bibitem{Jordan255}
M.~I. Jordan and T.~M. Mitchell, ``Machine learning: Trends, perspectives, and
  prospects,'' \emph{Science}, vol. 349, no. 6245, pp. 255--260, 2015.

\bibitem{10.1145/3298981}
Q.~Yang, Y.~Liu, T.~Chen, and Y.~Tong, ``Federated machine learning: Concept
  and applications,'' \emph{ACM Trans. Intell. Syst. Technol.}, vol.~10, no.~2,
  Jan. 2019.

\bibitem{8940936}
Q.~Yang, Y.~Liu, Y.~Cheng, Y.~Kang, T.~Chen, and H.~Yu, \emph{Federated
  Learning}.\hskip 1em plus 0.5em minus 0.4em\relax Cham, Switzerland:
  Springer, 2019.

\bibitem{9084352}
T.~Li, A.~K. Sahu, A.~Talwalkar, and V.~Smith, ``Federated learning:
  Challenges, methods, and future directions,'' \emph{{IEEE} Signal Process.
  Mag.}, vol.~37, no.~3, pp. 50--60, 2020.

\bibitem{pmlr-v70-guo17a}
C.~Guo, G.~Pleiss, Y.~Sun, and K.~Q. Weinberger, ``On calibration of modern
  neural networks,'' in \emph{Proc. Intl. Conf. Mach. Learning (ICML)},
  (Sydney, Australia), Aug. 2017, pp. 1321--1330.

\bibitem{lakshminarayanan2017simple}
B.~Lakshminarayanan, A.~Pritzel, and C.~Blundell, ``Simple and scalable
  predictive uncertainty estimation using deep ensembles,'' in \emph{Proc.
  Advances Neural Inf. Process. Syst.}, 2017, pp. 6402--6413.

\bibitem{Barber2012BayesianRA}
D.~Barber, \emph{Bayesian reasoning and machine learning}.\hskip 1em plus 0.5em
  minus 0.4em\relax Cambridge, U.K.: Cambridge University Press, 2012.

\bibitem{8453245}
O.~Simeone, ``A brief introduction to machine learning for engineers,''
  \emph{Found. Trends Signal Process.}, vol.~12, no. 3-4, pp. 200--431, 2018.

\bibitem{vehtari2020expectation}
A.~Vehtari, A.~Gelman, T.~Sivula, P.~Jyl{\"a}nki, D.~Tran, S.~Sahai,
  P.~Blomstedt, J.~P. Cunningham, D.~Schiminovich, and C.~P. Robert,
  ``Expectation propagation as a way of life: A framework for bayesian
  inference on partitioned data,'' \emph{J. Mach. Learn. Res.}, vol.~21, pp.
  1--53, 2020.

\bibitem{bui2018partitioned}
\BIBentryALTinterwordspacing
T.~D. Bui, C.~V. Nguyen, S.~Swaroop, and R.~E. Turner, ``Partitioned
  variational inference: A unified framework encompassing federated and
  continual learning,'' 2018. [Online]. Available:
  \url{https://arxiv.org/pdf/1811.11206.pdf}
\BIBentrySTDinterwordspacing

\bibitem{corinzia2019variational}
\BIBentryALTinterwordspacing
L.~Corinzia, A.~Beuret, and J.~M. Buhmann, ``Variational federated multi-task
  learning,'' 2019. [Online]. Available:
  \url{https://arxiv.org/pdf/1906.06268.pdf}
\BIBentrySTDinterwordspacing

\bibitem{ren2020accelerating}
J.~Ren, G.~Yu, and G.~Ding, ``Accelerating dnn training in wireless federated
  edge learning systems,'' \emph{{IEEE} J. Sel. Areas Commun.}, vol.~39, no.~1,
  pp. 219--232, 2020.

\bibitem{zhao2020federated}
Z.~Zhao, C.~Feng, H.~H. Yang, and X.~Luo, ``Federated-learning-enabled
  intelligent fog radio access networks: Fundamental theory, key techniques,
  and future trends,'' \emph{{IEEE} Wireless Commun.}, vol.~27, no.~2, pp.
  22--28, 2020.

\bibitem{li2019convergence}
X.~Li, K.~Huang, W.~Yang, S.~Wang, and Z.~Zhang, ``On the convergence of fedavg
  on non-iid data,'' in \emph{Proc. Int. Conf. Learning Representations}, 2020.

\bibitem{zhao2018federated}
\BIBentryALTinterwordspacing
Y.~Zhao, M.~Li, L.~Lai, N.~Suda, D.~Civin, and V.~Chandra, ``Federated learning
  with non-iid data,'' 2018. [Online]. Available:
  \url{https://arxiv.org/pdf/1806.00582.pdf}
\BIBentrySTDinterwordspacing

\bibitem{zhao2015stochastic}
P.~Zhao and T.~Zhang, ``Stochastic optimization with importance sampling for
  regularized loss minimization,'' in \emph{Proc. 32nd Int. Conf. Mach.
  Learn.}\hskip 1em plus 0.5em minus 0.4em\relax PMLR, 2015, pp. 1--9.

\bibitem{liu2020data}
Y.~Liu, Z.~Zeng, W.~Tang, and F.~Chen, ``Data-importance aware radio resource
  allocation: Wireless communication helps machine learning,'' \emph{{IEEE}
  Commun. Lett.}, vol.~24, no.~9, pp. 1981--1985, 2020.

\bibitem{zeng2021noise}
Z.~Zeng, Y.~Liu, W.~Tang, and F.~Chen, ``Noise is useful: Exploiting data
  diversity for edge intelligence,'' \emph{IEEE Wireless Commun. Lett.},
  vol.~10, no.~5, pp. 957--961, 2021.

\bibitem{ahn2014distributed}
S.~Ahn, B.~Shahbaba, and M.~Welling, ``Distributed stochastic gradient mcmc,''
  in \emph{Proc. Int. Conf. Mach. Learn.}, 2014, pp. 1044--1052.

\bibitem{pmlr-v115-mesquita20a}
D.~Mesquita, P.~Blomstedt, and S.~Kaski, ``Embarrassingly parallel mcmc using
  deep invertible transformations,'' in \emph{Proceedings of The 35th
  Uncertainty in Artificial Intelligence Conference}, 2020, pp. 1244--1252.

\bibitem{welling2011bayesian}
M.~Welling and Y.~W. Teh, ``Bayesian learning via stochastic gradient langevin
  dynamics,'' in \emph{Proc. 28th Int. Conf. Mach. Learn.}, 2011, pp. 681--688.

\bibitem{angelino2016patterns}
\BIBentryALTinterwordspacing
E.~Angelino, M.~J. Johnson, and R.~P. Adams, ``Patterns of scalable bayesian
  inference,'' 2016. [Online]. Available:
  \url{https://arxiv.org/pdf/1602.05221.pdf}
\BIBentrySTDinterwordspacing

\bibitem{kassab2022federated}
R.~Kassab and O.~Simeone, ``Federated generalized bayesian learning via
  distributed stein variational gradient descent,'' \emph{IEEE Trans. Signal
  Process.}, vol.~70, pp. 2180--2192, 2022.

\bibitem{10.5555/3157096.3157362}
Q.~Liu and D.~Wang, ``Stein variational gradient descent: A general purpose
  bayesian inference algorithm,'' in \emph{Proc. 30th Int. Conf. Neural Inf.
  Process. Syst.}, 2016, p. 2378–2386.

\bibitem{sattler2019sparse}
F.~Sattler, S.~Wiedemann, K.-R. M{\"u}ller, and W.~Samek, ``Sparse binary
  compression: Towards distributed deep learning with minimal communication,''
  in \emph{Proc. IEEE Int. Joint Conf. Neural Netw. (IJCNN)}, 2019, pp. 1--8.

\bibitem{lin2018deep}
Y.~Lin, S.~Han, H.~Mao, Y.~Wang, and B.~Dally, ``Deep gradient compression:
  Reducing the communication bandwidth for distributed training,'' in
  \emph{Proc. Int. Conf. Learning Representations}, 2018.

\bibitem{wang2019adaptive}
S.~Wang, T.~Tuor, T.~Salonidis, K.~K. Leung, C.~Makaya, T.~He, and K.~Chan,
  ``Adaptive federated learning in resource constrained edge computing
  systems,'' \emph{{IEEE} J. Sel. Areas Commun.}, vol.~37, no.~6, pp.
  1205--1221, 2019.

\bibitem{chen2019performance}
M.~Chen, Z.~Yang, W.~Saad, C.~Yin, H.~V. Poor, and S.~Cui, ``Performance
  optimization of federated learning over wireless networks,'' in \emph{Proc.
  IEEE Global Commun. Con.}, Waikoloa, HI, US, Dec. 2019, pp. 1--6.

\bibitem{tran2019federated}
N.~H. Tran, W.~Bao, A.~Zomaya, M.~N. Nguyen, and C.~S. Hong, ``Federated
  learning over wireless networks: Optimization model design and analysis,'' in
  \emph{Proc. IEEE Conf. Comput. Commun. (INFOCOM)}, Paris, France, Apr. 2019,
  pp. 1387--1395.

\bibitem{yang2020federated}
K.~Yang, T.~Jiang, Y.~Shi, and Z.~Ding, ``Federated learning via over-the-air
  computation,'' \emph{{IEEE} Trans. Wireless Commun.}, vol.~19, no.~3, pp.
  2022--2035, 2020.

\bibitem{zhu2019broadband}
G.~Zhu, Y.~Wang, and K.~Huang, ``Broadband analog aggregation for low-latency
  federated edge learning,'' \emph{{IEEE} Trans. Wireless Commun.}, vol.~19,
  no.~1, pp. 491--506, 2019.

\bibitem{zhu2020one}
G.~Zhu, Y.~Du, D.~G{\"u}nd{\"u}z, and K.~Huang, ``One-bit over-the-air
  aggregation for communication-efficient federated edge learning: Design and
  convergence analysis,'' \emph{{IEEE} Trans. Wireless Commun.}, 2020.

\bibitem{9107235}
D.~Liu, G.~Zhu, J.~Zhang, and K.~Huang, ``Data-importance aware user scheduling
  for communication-efficient edge machine learning,'' \emph{IEEE Trans. Cogn.
  Commun. Netw.}, vol.~7, no.~1, pp. 265--278, Jun. 2021.

\bibitem{9170917}
J.~Ren, Y.~He, D.~Wen, G.~Yu, K.~Huang, and D.~Guo, ``Scheduling for cellular
  federated edge learning with importance and channel awareness,'' \emph{{IEEE}
  Trans. Wireless Commun.}, vol.~19, no.~11, pp. 7690--7703, 2020.

\bibitem{9252927}
H.~T. Nguyen, V.~Sehwag, S.~Hosseinalipour, C.~G. Brinton, M.~Chiang, and
  H.~Vincent~Poor, ``Fast-convergent federated learning,'' \emph{{IEEE} J. Sel.
  Areas Commun.}, vol.~39, no.~1, pp. 201--218, 2021.

\bibitem{pmlr-v48-liub16}
Q.~Liu, J.~Lee, and M.~Jordan, ``A kernelized stein discrepancy for
  goodness-of-fit tests,'' in \emph{Proc. 33rd Int. Conf. Mach. Learn.}, 2016,
  pp. 276--284.

\bibitem{berlinet2011reproducing}
A.~Berlinet and C.~Thomas-Agnan, \emph{Reproducing kernel Hilbert spaces in
  probability and statistics}.\hskip 1em plus 0.5em minus 0.4em\relax New York,
  NY, USA: Springer, 2011.

\bibitem{knoblauch2019generalized}
J.~Knoblauch, J.~Jewson, and T.~Damoulas, ``Generalized variational
  inference,'' \emph{stat}, vol. 1050, p.~21, 2019.

\bibitem{gershman2012nonparametric}
S.~Gershman, M.~D. Hoffman, and D.~M. Blei, ``Nonparametric variational
  inference.'' in \emph{Proc. 29th Int. Conf. Mach. Learn.}, 2012.

\bibitem{hernandez2015probabilistic}
J.~M. Hern{\'a}ndez-Lobato and R.~Adams, ``Probabilistic backpropagation for
  scalable learning of bayesian neural networks,'' in \emph{Proc. Int. Conf.
  Mach. Learn.}, 2015, pp. 1861--1869.

\end{thebibliography}

\begin{IEEEbiography}
[{\includegraphics[width=1in,height=1.25in,clip,keepaspectratio]{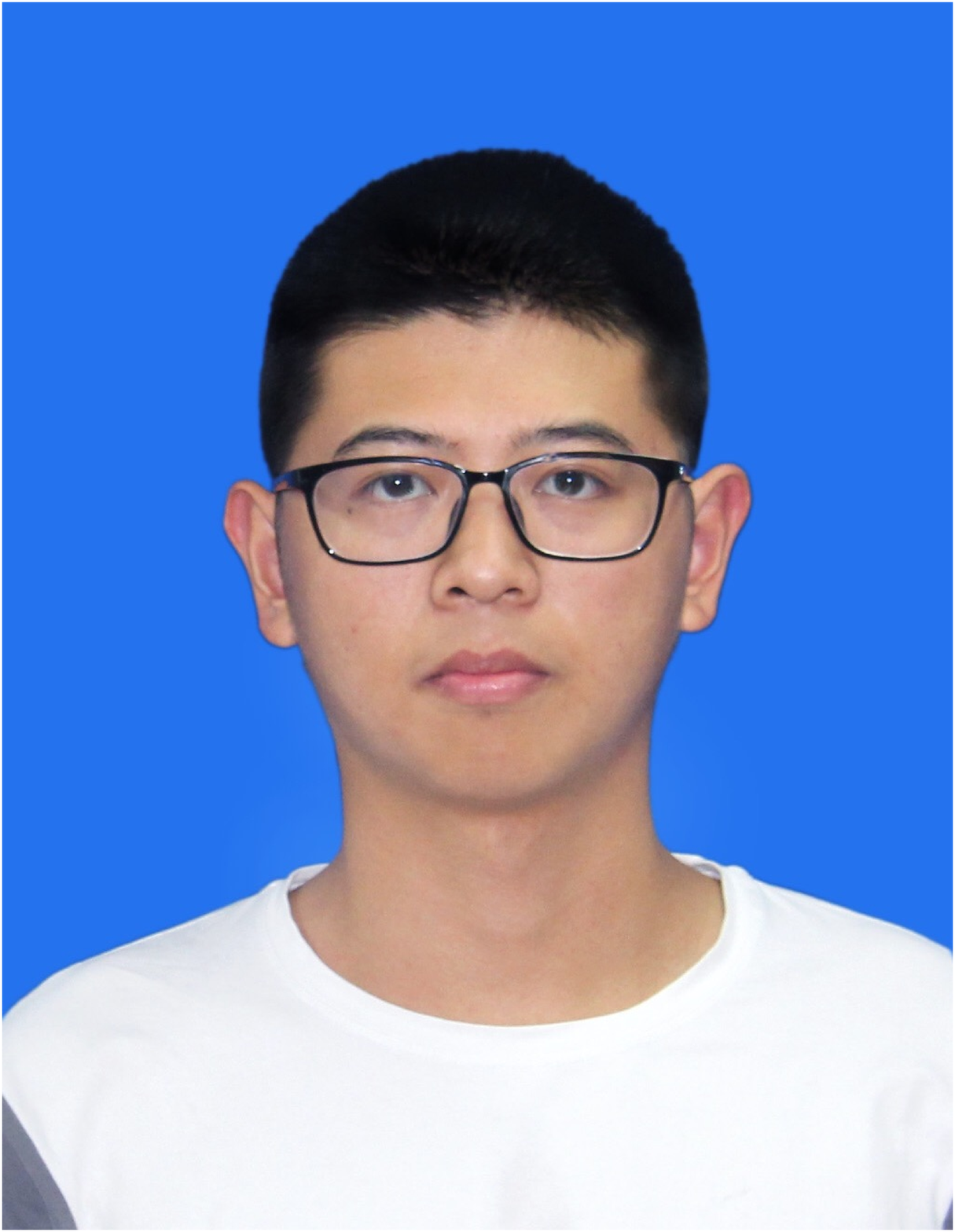}}]
{Jiarong Yang} received the B.S. degree from South China University of Technology, Guangzhou, China, in 2021. He is currently pursuing the M.S. degree with the School of Electronic and Information Engineering, South China University of Technology, Guangzhou, China. His research interests include federated learning, and Bayesian learning.
\end{IEEEbiography}
\begin{IEEEbiography}
[{\includegraphics[width=1in,height=1.25in,clip,keepaspectratio]{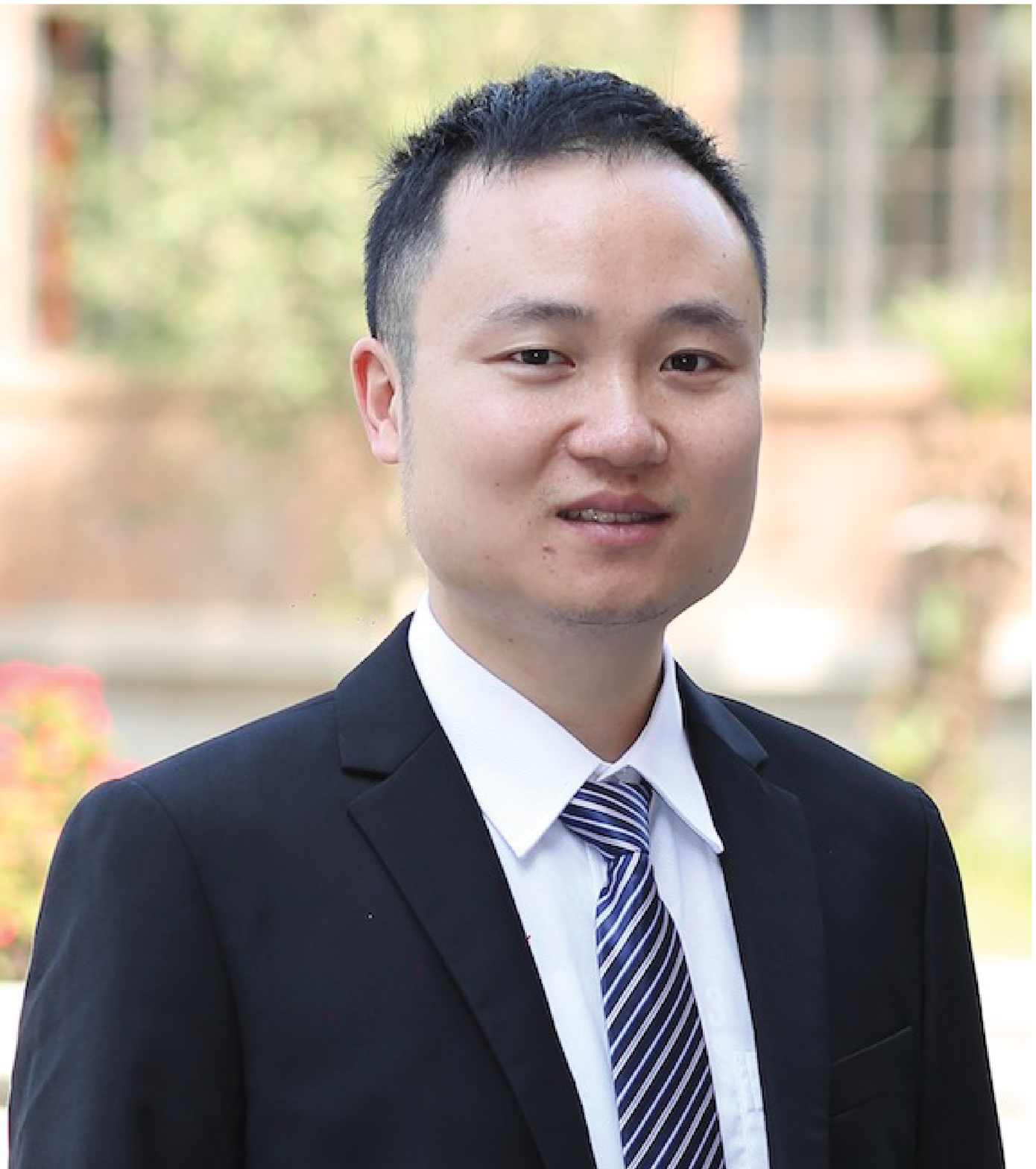}}]
{Yuan Liu}(Senior Member, IEEE) received the B.S. degree from Hunan University of Science and Technology, Xiangtan, China, in 2006; the M.S. degree from Guangdong University of Technology, Guangzhou, China, in 2009; and the Ph.D. degree from Shanghai Jiao Tong University, China, in 2013, all in electronic engineering. Since 2013, he has been with the School of Electronic and Information Engineering, South China University of Technology, Guangzhou, where he is currently an associate professor.
He serves as an editor for the \textsc{IEEE Communications Letters} and the \textsc{IEEE Access}. His research interests include 5G communications and beyond, mobile edge computation offloading, and machine learning in wireless networks.
\end{IEEEbiography}
\begin{IEEEbiography}
[{\includegraphics[width=1in,height=1.25in,clip,keepaspectratio]{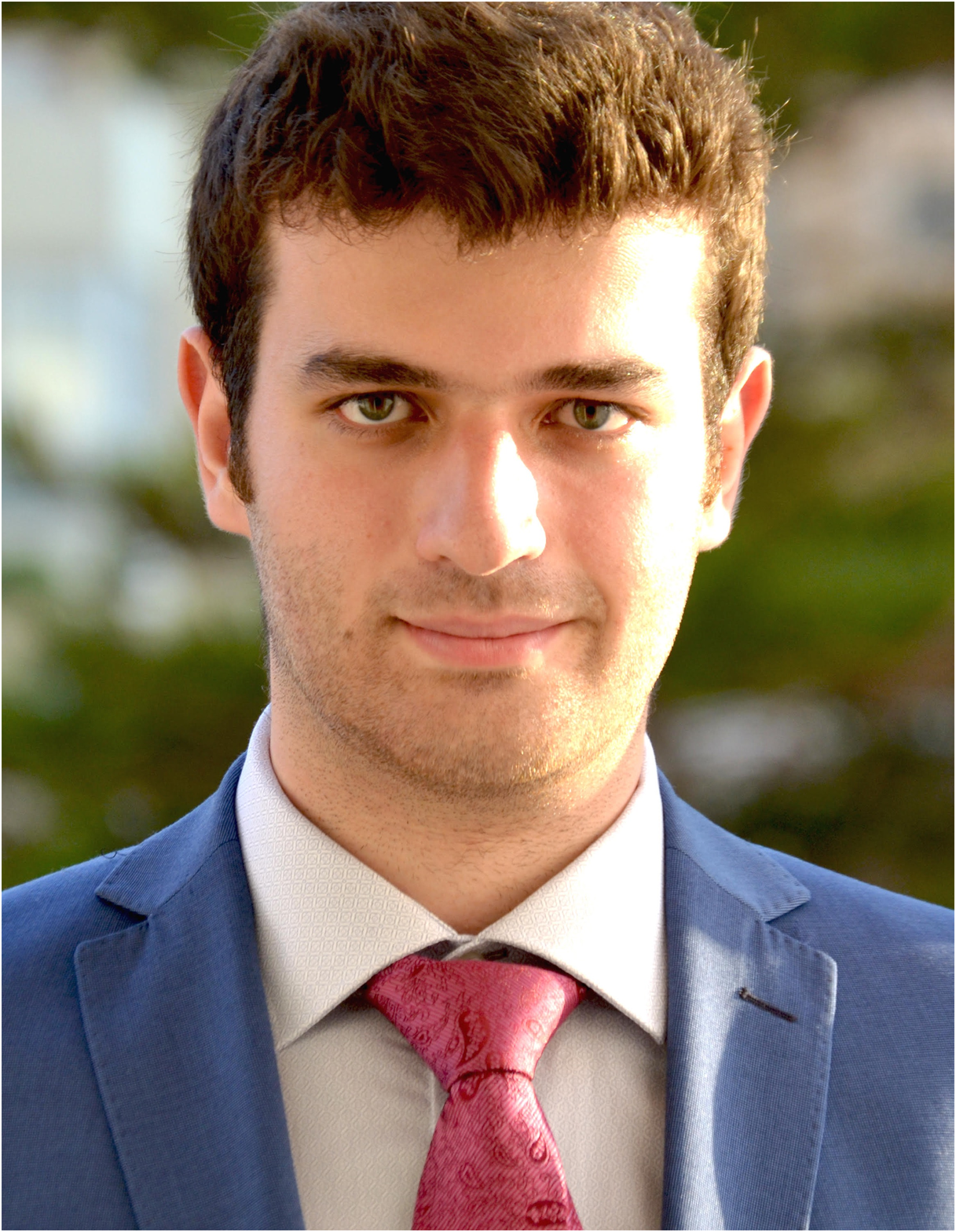}}]
{Rahif Kassab} received the two Engineering degrees from Telecom Paris, Paris, France, and Lebanese University, Beirut, Lebanon, in 2017, the M.Sc. degree in advanced communication networks jointly from École Polytechnique, France, and Telecom Paris, and the Ph.D. degree in computer science from King’s College London, London, U.K., in 2021. His research interests include communication theory, optimization, and machine learning. His industrial experience includes a six-month internship with Nokia Bell Labs and a summer internship with Huawei’s Mathematical and Algorithmic Sciences Lab, Paris, France. He received the 2018 IEEE Globecom Student Travel Grant, the Ile-de-France Masters Scholarship, from 2015 to 2017, and a Ph.D. Fellowship, from 2018 to 2020, awarded from King’s College London and funded by the European Research Council. 
\end{IEEEbiography}

\end{document}